\documentclass{article}

\PassOptionsToPackage{numbers,compress}{natbib}

\usepackage[preprint]{neurips_2026}
\usepackage[utf8]{inputenc}
\usepackage[T1]{fontenc}
\usepackage{hyperref}
\usepackage{url}
\usepackage{booktabs}
\usepackage{nicefrac}
\usepackage{microtype}
\usepackage{xcolor}
\usepackage{amsmath,amssymb,amsfonts,mathtools}
\usepackage{amsthm}
\usepackage{graphicx}
\usepackage{tabularx}
\usepackage{comment}
\usepackage{enumitem}
\usepackage{tikz}
\usepackage{algorithm}

\usepackage{algorithm}
\usepackage{algpseudocode}
\usepackage[nameinlink,noabbrev]{cleveref}

\crefname{assumption}{assumption}{assumptions}
\Crefname{assumption}{Assumption}{Assumptions}
\usepackage{graphicx} 
\usepackage{amsmath}
\usepackage{tabularx}
\usepackage{xcolor}
\usepackage{amsthm}  
\usepackage{comment}
\usepackage{mathabx}
\newtheorem{theorem}{Theorem}

\newtheorem{corollary}{Corollary}

\usepackage{enumitem}

\newcounter{subassumption}[assumption]

\crefname{subassumption}{Assumption}{Assumptions}
\Crefname{subassumption}{Assumption}{Assumptions}

\usepackage{amsthm, amsmath}
\usepackage{tikz}

\title{Matrix-Valued Optimism is Matrix-Valued Augmentation: Additive Hybrid Designs for Constrained Optimization }
\author{
  Jiayi Zhao \\
  University of Washington \\
  Seattle, WA, USA \\
  \texttt{zjiay@uw.edu}
}

\begin{document}

\maketitle
\begin{abstract}
Augmented Lagrangian and optimistic primal--dual methods stabilize equality-constrained optimization through seemingly different mechanisms: the former adds constraint-dependent primal curvature, while the latter adds dual memory. Recent work has shown that these mechanisms are equivalent for scalar parameters. We extend this equivalence to matrix-valued correction. We prove an additivity principle: for symmetric matrix parameters, the ideal primal trajectory depends only on the summed correction matrix, not on how it is split between augmented and optimistic channels. This exposes a design freedom: algebraically equivalent decompositions can have different finite-step feasibility because augmented correction affects primal curvature, whereas optimistic correction affects the scale of the dual memory correction. We formulate the resulting step-size-limited design problem and derive a closed-form hybrid rule that selects a matrix correction, splits it between the two channels, and chooses primal and dual steps using local spectral weights. Experiments on nonlinear equality-constrained problems with controlled constraint-Jacobian conditioning show that the hybrid design improves over pure augmented and pure optimistic endpoints, closely tracks a grid-search hybrid oracle, and is competitive with first-order primal--dual baselines under mild-to-moderate ill-conditioning. The experiments also identify the expected limitation: exact cancellation requires increasingly large matrix corrections as the constraint Jacobian becomes ill-conditioned.
\end{abstract}

\section{Introduction}
\label{sec:introduction}

Equality-constrained optimization is a central primitive in machine learning, control, and scientific computing, where constraints encode feasibility, safety, conservation laws, or structural requirements. First-order primal--dual methods are attractive in these settings because they are simple, scalable, and compatible with differentiable models. However, their dynamics can be highly sensitive to the geometry of the constraint map: primal descent and dual ascent may interact through poorly conditioned constraint directions, leading to oscillations, slow damping, and restrictive step-size choices \citep{arrow1958studies,nocedal2006numerical,chambolle2011first}.

Two widely used mechanisms address this instability. Augmented Lagrangian methods add constraint-dependent curvature to the primal update \citep{hestenes1969multiplier,powell1969method,rockafellar1974augmented,nocedal2006numerical}, while optimistic and extra-gradient-type methods add memory corrections that damp rotational behavior in saddle dynamics \citep{korpelevich1976extragradient,popov1980modification,daskalakis2018training,mokhtari2020unified}. Although these mechanisms are usually viewed as distinct, recent work has shown that, in the scalar-parameter setting, dual optimistic ascent can be algebraically equivalent to a primal-first augmented Lagrangian method with a matching scalar penalty \citep{ramirez2026dualoptimisticascentpi}. This suggests that augmentation and optimism may be two realizations of a common correction mechanism.

The scalar setting, however, is often too restrictive. A scalar penalty \(c\|h(x)\|^2/2\) treats all residual coordinates uniformly, even when different constraint directions have different curvature, scaling, reliability, or computational cost. Nonuniform penalty choices, including one penalty parameter per constraint or multiple augmented-Lagrangian penalties, have been used to address heterogeneous constraint scales \citep{andreani2008augmented,antil2023alesqp}. In such cases, the natural correction object is matrix-valued:
\[
    \frac12 h(x)^\top C h(x).
\]
This raises the main question of the paper: does the equivalence between optimism and augmentation persist for matrix-valued correction, and if so, can this equivalence be used algorithmically?

We show that the answer is yes. For symmetric matrix parameters, matrix-valued optimistic correction and matrix-valued augmented correction are equivalent at the level of primal trajectories. More generally, we prove an additivity principle: if a correction matrix is decomposed as
\[
    M=C_{\rm aug}+C_{\rm opt},
\]
then the ideal primal trajectory depends only on the total matrix \(M\), not on how \(M\) is split between augmented and optimistic channels, provided the effective dual variables are initialized consistently.

This equivalence exposes a design freedom. Although different decompositions can generate the same ideal primal trajectory, they can have very different finite-step behavior. The augmented component \(C_{\rm aug}\) changes the local primal curvature and therefore the admissible primal step size. The optimistic component \(C_{\rm opt}\) changes the scale of the explicit dual memory correction and therefore interacts with primal--dual product restrictions. Thus, the split between augmented and optimistic correction becomes an implementation choice even when the underlying matrix correction is fixed.

We use this observation to formulate a step-size-limited matrix correction design problem and derive a closed-form hybrid rule. The rule constructs a target matrix correction that makes the local augmented curvature positive definite, decomposes this correction between the two channels, and selects primal and dual step sizes using local spectral weights. Experiments on nonlinear equality-constrained problems with controlled constraint-Jacobian conditioning show that the hybrid rule improves over pure augmented and pure optimistic endpoints, closely tracks a grid-search hybrid oracle, and is competitive with first-order primal--dual baselines in the mild-to-moderate ill-conditioned regime. The experiments also identify the expected limitation of exact cancellation: the required matrix correction grows rapidly as the constraint Jacobian becomes highly ill-conditioned.

\paragraph{Contributions.}
Our contributions are:
\begin{itemize}
    \item We extend the scalar equivalence between dual optimistic ascent and augmented Lagrangian dynamics to symmetric matrix-valued correction.
    \item We prove an additivity principle showing that augmented and optimistic matrix correction enter the ideal primal trajectory through their sum.
    \item We formulate a step-size-limited decomposition problem and derive a closed-form hybrid design with a target-curvature guarantee.
    \item We validate the hybrid design on nonlinear equality-constrained problems with controlled constraint-Jacobian conditioning, comparing against pure augmented, pure optimistic, OGDA, extragradient, PDHG-style, and linearized augmented-Lagrangian baselines.
\end{itemize}

\paragraph{Notation.}
Unless otherwise specified, \(\|\cdot\|\) denotes the induced \(2\)-norm, \(A^\dagger\) denotes the Moore--Penrose pseudoinverse, and \(J_h(x)\) denotes the Jacobian of \(h\) at \(x\).
\section{Matrix-Valued Augmentation/ Optimism helps solve ill-conditioned constraints}
\label{sec:why-matrix-correction}

Classical augmented Lagrangian methods typically use a scalar penalty,
\[
    \mathcal L_c(x,\mu)
    =
    f(x)+\mu^\top h(x)+\frac{c}{2}\|h(x)\|^2 ,
\]
which corresponds to the matrix choice \(C=cI\). This isotropic correction is simple, but it treats all residual coordinates uniformly. In many constrained learning and control problems, different residual directions can have different scaling, curvature, reliability, or implementation cost. A matrix-valued penalty,
\[
    \mathcal L_C(x,\mu)
    =
    f(x)+\mu^\top h(x)+\frac12 h(x)^\top C h(x),
\]
allows the correction to be shaped according to the geometry of the constraint residual. This is consistent with augmented-Lagrangian practice: methods with one penalty per constraint already move beyond a single scalar penalty toward diagonal matrix penalties, and augmented-Lagrangian SQP methods often use multiple penalties to handle constraints with disparate scalings.

The advantage is visible in a two-dimensional local model. Consider
\[
    \min_{x\in\mathbb R^2} -\frac12\|x\|^2
    \qquad
    \text{s.t.}
    \qquad
    Bx=0,
    \qquad
    B=
    \begin{bmatrix}
        1000 & 0\\
        0 & 1
    \end{bmatrix}.
\]
The local primal curvature is \(A=-I\). A scalar augmentation \(C=cI\) must take \(c\ge 1\) to make
\[
    A+B^\top C B \succeq 0.
\]
At the smallest feasible scalar value,
\[
    A+B^\top B
    =
    \begin{bmatrix}
        999999 & 0\\
        0 & 0
    \end{bmatrix},
\]
so the weakly scaled constraint direction forces enormous curvature in the strongly scaled direction. By contrast, the matrix choice
\[
    C=
    \begin{bmatrix}
        8.96\cdot 10^{-6} & 0\\
        0 & 8.96
    \end{bmatrix}
\]
gives
\[
    B^\top C B=8.96I,
    \qquad
    A+B^\top C B=7.96I.
\]
Thus a matrix penalty can balance the augmented curvature, while a scalar penalty cannot distinguish the residual scalings. In the corresponding augmented-Lagrangian local Jacobian, this balancing improves the optimized spectral gap by roughly three orders of magnitude; details are given in Appendix~\ref{appapp:stiffness-example}.

This example motivates the matrix-valued viewpoint used throughout the paper. Matrix correction is useful not only as a more flexible augmented penalty, but also as the object that can be redistributed between augmented and optimistic channels. The next section proves that these channels are algebraically additive: the ideal primal trajectory depends on the summed matrix correction, while the implementation can choose how to realize that correction.
\section{Equivalence and Additivity of Matrix-Valued correction}
\label{sec:matrix-correction-equivalence}

We now show that matrix-valued augmented correction and matrix-valued optimistic correction are two algebraic realizations of the same underlying correction. Throughout this section, write
\[
    h_t := h(x_t),
    \qquad
    B_t := J_h(x_t).
\]

\paragraph{Pure matrix-valued augmented update.}
Given a matrix \(C\), the primal-first matrix-augmented update is
\[
    \mu_{t+1}
    =
    \mu_t+\eta_{\rm d}h_t,
\]
\[
    x_{t+1}
    =
    x_t
    -
    \eta_x
    \left[
        \nabla f(x_t)
        +
        B_t^\top
        \left(
            \mu_{t+1}+Ch_t
        \right)
    \right].
    \tag{AUG}
\]
This is gradient descent--ascent on the matrix-augmented Lagrangian
\[
    \mathcal L_C(x,\mu)
    =
    f(x)+\mu^\top h(x)+\frac12 h(x)^\top C h(x).
\]

\paragraph{Pure matrix-valued optimistic update.}
Given a matrix \(\Omega\), the matrix-optimistic dual update is
\[
    \mu_{t+1}
    =
    \mu_t+\eta_{\rm d}h_t+\Omega(h_t-h_{t-1}),
\]
\[
    x_{t+1}
    =
    x_t
    -
    \eta_x
    \left[
        \nabla f(x_t)+B_t^\top\mu_{t+1}
    \right].
    \tag{OPT}
\]
The optimistic term is a residual-memory correction in the dual channel.

\paragraph{Hybrid matrix-valued update.}
The hybrid method combines both mechanisms. Given an augmented correction matrix \(C\) and an optimistic correction matrix \(\Omega\), define \(ALG(C,\Omega)\) by
\[
    \mu_{t+1}
    =
    \mu_t+\eta_{\rm d}h_t+\Omega(h_t-h_{t-1}),
\]
\[
    x_{t+1}
    =
    x_t
    -
    \eta_x
    \left[
        \nabla f(x_t)
        +
        B_t^\top
        \left(
            \mu_{t+1}+Ch_t
        \right)
    \right].
    \tag{HYB}
\]
Thus \(ALG(C,0)\) is the pure augmented method and \(ALG(0,\Omega)\) is the pure optimistic method.

The key observation is that the augmented component \(C\) can be absorbed into the optimistic channel by a change of dual variables. Define the effective dual variable
\[
    \nu_t := \mu_t+C h_{t-1}.
\]
Then \(ALG(C,\Omega)\) is equivalent to the pure optimistic method \(ALG(0,C+\Omega)\), initialized with \(\nu_0=\mu_0+C h_{-1}\). This gives the following equivalence theorem.

\begin{theorem}[Matrix-valued augmented--optimistic equivalence]
\label{thm:matrix-correction-equivalence}
Let \(C,\Omega\in\mathbb R^{m\times m}\), with \(C=C^\top\). Consider \(ALG(C,\Omega)\) and \(ALG(0,C+\Omega)\) with the same primal history,
\[
    x_{-1}^{P}=x_{-1}^{H},
    \qquad
    x_0^{P}=x_0^{H},
\]
and with matched effective-dual initialization,
\[
    \nu_0^P=\mu_0^H+C h(x_{-1}^H).
\]
Then, for every \(t\ge0\),
\[
    x_t^P=x_t^H,
    \qquad
    \nu_t^P=\mu_t^H+C h(x_{t-1}^H).
\]
In particular, the two methods generate identical primal trajectories.
\end{theorem}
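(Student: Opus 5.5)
The plan is a direct induction on \(t\), driven by one algebraic identity: the combination \(\mu_{t+1}+Ch_t\) that appears inside the primal gradient of \(ALG(C,\Omega)\) is exactly the effective dual variable \(\nu_{t+1}\), and \(\nu\) satisfies the dual recursion of \(ALG(0,C+\Omega)\). Throughout, the dual iterate of the pure optimistic run \(ALG(0,C+\Omega)\) is denoted \(\nu^P_t\); this is its \(\mu\)-variable. The hybrid run is written with \(\mu^H_t\), and we set \(\nu^H_t:=\mu^H_t+Ch(x^H_{t-1})\).

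\textbf{Induction hypothesis.} For \(t\ge 0\), the hypothesis \(\mathcal{I}_t\) states
\[
x^P_{t-1}=x^H_{t-1},\qquad x^P_t=x^H_t,\qquad \nu^P_t=\nu^H_t .
\]
We need two consecutive primal iterates, not one, because both updates use the memory term \(h_t-h_{t-1}\). The base case \(\mathcal{I}_0\) is exactly the assumed shared primal history together with the matched initialization \(\nu^P_0=\mu^H_0+Ch(x^H_{-1})\).

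\textbf{Dual step.} Assume \(\mathcal{I}_t\). Then \(h_t\) and \(h_{t-1}\) agree across the two runs, and so does \(B_t\). From (HYB),
\[
\nu^H_{t+1}=\mu^H_{t+1}+Ch_t=\mu^H_t+\eta_{\rm d}h_t+\Omega(h_t-h_{t-1})+Ch_t .
\]
We substitute \(\mu^H_t=\nu^H_t-Ch_{t-1}\) and regroup. This gives
\[
\nu^H_{t+1}=\nu^H_t+\eta_{\rm d}h_t+(C+\Omega)(h_t-h_{t-1}),
\]
which is precisely the dual recursion of \(ALG(0,C+\Omega)\). Since \(\nu^H_t=\nu^P_t\) and the residuals agree, we obtain \(\nu^H_{t+1}=\nu^P_{t+1}\).

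\textbf{Primal step.} The hybrid primal update reads
\[
x^H_{t+1}=x_t-\eta_x\bigl[\nabla f(x_t)+B_t^\top\nu^H_{t+1}\bigr].
\]
The optimistic primal update reads
\[
x^P_{t+1}=x_t-\eta_x\bigl[\nabla f(x_t)+B_t^\top\nu^P_{t+1}\bigr].
\]
These coincide, so \(x^P_{t+1}=x^H_{t+1}\). Together with \(x^P_t=x^H_t\), this establishes \(\mathcal{I}_{t+1}\).

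\textbf{Main difficulty.} The argument is short, and the only real care needed is bookkeeping. First, the hypothesis must carry the lagged iterate \(x_{t-1}\) so that \(h_{t-1}\) is available. Second, the index shift must be respected: \(\nu_{t+1}\) uses \(h_t\), not \(h_{t+1}\), which is what makes \(\mu_{t+1}+Ch_t\) match the primal gradient exactly.

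\textbf{Remark on symmetry.} This verification never uses \(C=C^\top\); the identity is purely linear. Symmetry matters only for interpreting \(C\) as the Hessian of the penalty \(\tfrac12 h^\top Ch\), so that (AUG) is gradient descent--ascent on \(\mathcal L_C\).
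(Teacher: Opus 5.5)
Your proposal is correct and follows the paper's proof essentially step for step. It uses the same induction carrying \(x_{t-1}\), \(x_t\), and the effective-dual relation \(\nu^P_t=\mu^H_t+Ch(x^H_{t-1})\); the dual step is closed by regrouping the \(Ch_{t-1}\) term, and the primal step by direct substitution (you run the algebra from the hybrid side rather than from the pure-optimistic side, which is immaterial), and your remark that \(C=C^\top\) is never used in the verification holds equally for the paper's argument.
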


\begin{proof}
The proof is by induction. The initialization gives
\[
    x_0^P=x_0^H,
    \qquad
    \nu_0^P=\mu_0^H+C h(x_{-1}^H).
\]
Assume that, for some \(t\ge0\),
\[
    x_t^P=x_t^H,
    \qquad
    x_{t-1}^P=x_{t-1}^H,
    \qquad
    \nu_t^P=\mu_t^H+C h(x_{t-1}^H).
\]
Using the pure optimistic dual update with summed correction \(C+\Omega\),
\[
\begin{aligned}
    \nu_{t+1}^P
    &=
    \nu_t^P
    +
    \eta_{\rm d} h(x_t^P)
    +
    (C+\Omega)
    \left(
        h(x_t^P)-h(x_{t-1}^P)
    \right)
    \\
    &=
    \mu_t^H
    +
    C h(x_{t-1}^H)
    +
    \eta_{\rm d} h(x_t^H)
    +
    (C+\Omega)
    \left(
        h(x_t^H)-h(x_{t-1}^H)
    \right)
    \\
    &=
    \mu_t^H
    +
    \eta_{\rm d}h(x_t^H)
    +
    \Omega
    \left(
        h(x_t^H)-h(x_{t-1}^H)
    \right)
    +
    C h(x_t^H)
    \\
    &=
    \mu_{t+1}^H+C h(x_t^H).
\end{aligned}
\]
This proves the effective-dual relation at time \(t+1\). Substituting this identity into the pure optimistic primal update gives
\[
\begin{aligned}
    x_{t+1}^P
    &=
    x_t^P
    -
    \eta_x
    \left[
        \nabla f(x_t^P)
        +
        B_t^{P\top}\nu_{t+1}^P
    \right]
    \\
    &=
    x_t^H
    -
    \eta_x
    \left[
        \nabla f(x_t^H)
        +
        B_t^{H\top}
        \left(
            \mu_{t+1}^H+C h(x_t^H)
        \right)
    \right]
    \\
    &=
    x_{t+1}^H.
\end{aligned}
\]
Therefore the induction closes, and the two methods generate identical primal trajectories.
\end{proof}

The theorem implies that augmented and optimistic matrix correction enter the ideal primal dynamics through their sum. Therefore, any two decompositions with the same total matrix are equivalent after effective-dual alignment.

\begin{corollary}[Additivity of matrix-valued correction]
\label{cor:additivity}
Let \((C_1,\Omega_1)\) and \((C_2,\Omega_2)\) satisfy
\[
    C_1+\Omega_1=C_2+\Omega_2.
\]
Consider the two hybrid methods \(ALG(C_1,\Omega_1)\) and \(ALG(C_2,\Omega_2)\). If their primal histories and effective dual variables are initialized consistently, then they generate identical primal trajectories:
\[
    x_t^{(1)}=x_t^{(2)}
    \qquad
    \forall t\ge0.
\]
\end{corollary}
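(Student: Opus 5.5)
The plan is to reduce both hybrid methods to a common pure optimistic method with the summed correction and then compare them through Theorem~\ref{thm:matrix-correction-equivalence}. Set $M := C_1+\Omega_1 = C_2+\Omega_2$. The corollary implicitly requires $C_1$ and $C_2$ to be symmetric, as the theorem does; in fact the theorem's proof never uses symmetry, so this hypothesis is harmless. Apply Theorem~\ref{thm:matrix-correction-equivalence} twice, once with $(C,\Omega)=(C_1,\Omega_1)$ and once with $(C,\Omega)=(C_2,\Omega_2)$. In both cases the comparison method is the same pure optimistic algorithm $ALG(0,M)$.

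Before invoking the theorem, I will make the phrase ``initialized consistently'' precise. The two hybrid runs share the primal history $x_{-1}^{(1)}=x_{-1}^{(2)}=:x_{-1}$ and $x_0^{(1)}=x_0^{(2)}=:x_0$. Their effective dual variables agree at time zero:
\[
    \mu_0^{(1)}+C_1 h(x_{-1}) \;=\; \mu_0^{(2)}+C_2 h(x_{-1}) \;=:\; \nu_0 .
\]
Now define a single reference run $(x_t^P,\nu_t^P)$ of $ALG(0,M)$, initialized with $x_{-1}^P=x_{-1}$, $x_0^P=x_0$, and dual variable $\nu_0$. It shares the same step sizes $\eta_x$, $\eta_{\rm d}$ as the two hybrid runs, which is implicit in the corollary.

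The reference run satisfies the hypotheses of Theorem~\ref{thm:matrix-correction-equivalence} relative to $ALG(C_1,\Omega_1)$. It uses the same primal history, and its initial dual satisfies $\nu_0^P=\mu_0^{(1)}+C_1h(x_{-1})$. The theorem therefore gives $x_t^P=x_t^{(1)}$ for all $t\ge0$. The same reference run also satisfies the hypotheses relative to $ALG(C_2,\Omega_2)$, since $\nu_0^P=\mu_0^{(2)}+C_2h(x_{-1})$, so $x_t^P=x_t^{(2)}$ for all $t\ge 0$. By transitivity, $x_t^{(1)}=x_t^{(2)}$ for every $t\ge0$. As a byproduct, the effective duals also stay aligned: $\mu_t^{(1)}+C_1h(x_{t-1}^{(1)})=\mu_t^{(2)}+C_2h(x_{t-1}^{(2)})$.

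There is no real analytic obstacle, since the argument is purely algebraic. The only delicate point is the interpretation of ``effective dual variables initialized consistently.'' It must mean equality of $\mu_0^{(i)}+C_ih(x_{-1})$, not equality of the raw duals $\mu_0^{(i)}$. Once the condition is stated this way, a single pure optimistic run serves both applications of the theorem.
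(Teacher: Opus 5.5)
Your proposal is correct and follows essentially the same route as the paper's proof: it reduces each hybrid method to the common pure optimistic method $ALG(0,C_1+\Omega_1)=ALG(0,C_2+\Omega_2)$ via Theorem~\ref{thm:matrix-correction-equivalence} and then chains the two equivalences. Two points in your write-up go beyond what the paper states, namely the single reference run with $\nu_0=\mu_0^{(1)}+C_1h(x_{-1})=\mu_0^{(2)}+C_2h(x_{-1})$ and your observation that the theorem's proof never uses symmetry of $C$; these make precise what the paper leaves implicit in ``initialized consistently.''
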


\begin{proof}
By \Cref{thm:matrix-correction-equivalence},
\[
    ALG(C_1,\Omega_1)
    \equiv
    ALG(0,C_1+\Omega_1).
\]
Since \(C_1+\Omega_1=C_2+\Omega_2\),
\[
    ALG(0,C_1+\Omega_1)
    =
    ALG(0,C_2+\Omega_2).
\]
Applying \Cref{thm:matrix-correction-equivalence} again gives
\[
    ALG(0,C_2+\Omega_2)
    \equiv
    ALG(C_2,\Omega_2).
\]
Thus
\[
    ALG(C_1,\Omega_1)
    \equiv
    ALG(0,C_1+\Omega_1)
    =
    ALG(0,C_2+\Omega_2)
    \equiv
    ALG(C_2,\Omega_2),
\]
which is precisely equality of the generated primal trajectories under the consistent initialization.
\end{proof}

\paragraph{Implication.}
Additivity identifies an equivalence class of algorithms with the same ideal primal trajectory. The decomposition is nevertheless important for implementation: \(C\) changes the local primal curvature, while \(\Omega\) changes the scale of the explicit dual memory term. The next section uses this distinction to formulate a step-size-limited design problem.
\section{Step-Size-Limited correction Design}
\label{sec:stepsize-design}

The additivity theorem shows that the ideal primal trajectory depends on the summed correction matrix
\[
    M=C_{\rm aug}+C_{\rm opt}.
\]

We study this tradeoff in the local equality-constrained model around a KKT point \((x^\star,\mu^\star)\). Let
\[
    H:=\nabla^2_{xx}\mathcal L(x^\star,\mu^\star),
    \qquad
    B:=J_h(x^\star),
\]
where \(\mathcal L(x,\mu)=f(x)+\mu^\top h(x)\). We assume the local model is augmentable: there exists a symmetric matrix \(M\) such that
\[
    H+B^\top M B \succ 0.
\]
Since \(B^\top M B\) cannot affect directions in \(\ker(B)\), this requires
\[
    z^\top H z>0
    \qquad
    \forall z\in\ker(B)\setminus\{0\}.
\]
Directions visible through \(B\) can then be regularized by matrix correction.

Given a decomposition \(M=C_{\rm aug}+C_{\rm opt}\), the hybrid update has the form
\[
    \mu_{t+1}
    =
    \mu_t+\eta_{\rm d}h_t+C_{\rm opt}(h_t-h_{t-1}),
\]
\[
    x_{t+1}
    =
    x_t-\eta_x
    \left[
        \nabla f(x_t)
        +
        B^\top
        \left(
            \mu_{t+1}+C_{\rm aug}h_t
        \right)
    \right].
\]
The design question is which decomposition of \(M\) permits the largest useful primal and dual steps.

See \cref{app:stepsize-constraints} for constraints on step size (S1 - S3).
 
\section{Closed-Form Hybrid Correction Design}
\label{sec:closed-form-design}

The design problem in \Cref{sec:stepsize-design} can be solved by numerical search, but our goal is an explicit rule. We call a realization \emph{hybrid} when the total correction matrix \(M\) is split between an augmented-channel correction \(C_{\rm aug}\) and an optimistic-channel correction \(C_{\rm opt}\), with
\[
    M=C_{\rm aug}+C_{\rm opt}.
\]
We now derive a closed-form hybrid correction design that satisfies the target-curvature condition, admits an augmented--optimistic decomposition, and chooses primal and dual steps according to the weighted local margin.

At iteration \(t\), let
\[
    H_t:=\nabla^2_{xx}\mathcal L(x_t,\mu_t),
    \qquad
    B_t:=J_h(x_t).
\]
Assume \(B_t\) has full column rank on the local region of interest. Let \(R_t\succ0\) be a chosen residual metric; in the experiments we use \(R_t=I\). Define the cancellation matrix
\[
    C_{{\rm can},t}
    :=
    -B_t^{\dagger\top}H_tB_t^\dagger .
\]
Since \(B_t^\dagger B_t=I\),
\[
    H_t+B_t^\top C_{{\rm can},t}B_t=0.
\]
Thus \(C_{{\rm can},t}\) exactly cancels the local curvature in directions visible through the constraint Jacobian.

To obtain a positive target curvature, choose
\[
    \alpha_t
    =
    \frac{\delta}
    {\lambda_{\min}(B_t^\top R_tB_t)}
\]
and set
\[
    M_t=C_{{\rm can},t}+\alpha_tR_t.
\]
Then
\[
    H_t+B_t^\top M_tB_t
    =
    \alpha_tB_t^\top R_tB_t
    \succeq
    \delta I.
\]
Therefore \(M_t\) is the minimum-strength member of the family \(C_{{\rm can},t}+\alpha R_t\) that achieves the target curvature margin \(\delta\).

We decompose \(M_t\) as
\[
    C_{{\rm aug},t}
    =
    C_{{\rm can},t}
    +
    \theta_t\alpha_tR_t,
    \qquad
    C_{{\rm opt},t}
    =
    (1-\theta_t)\alpha_tR_t,
\]
so that
\[
    C_{{\rm aug},t}+C_{{\rm opt},t}=M_t.
\]
By \Cref{cor:additivity}, all choices of \(\theta_t\in[0,1]\) realize the same ideal total correction \(M_t\). The role of \(\theta_t\) is therefore not to change the ideal primal trajectory, but to choose a more implementable realization.

Under the cancellation family,
\[
    H_t+B_t^\top C_{{\rm aug},t}B_t
    =
    \theta_t\alpha_tB_t^\top R_tB_t,
\]
while
\[
    \|C_{{\rm opt},t}\|_2
    =
    (1-\theta_t)\alpha_t\|R_t\|_2.
\]
Balancing the primal curvature restriction \((S1)\) with the optimism-induced restriction obtained from \((S2)\)--\((S3)\) gives
\[
    \theta_t
    =
    \frac{
        \kappa_x\|R_t\|_2\|B_t\|_2^2
    }{
        \kappa_x\|R_t\|_2\|B_t\|_2^2
        +
        \kappa_p\kappa_\omega\|B_t^\top R_tB_t\|_2
    }.
\]
This rule balances the cost of placing correction in the augmented channel against the cost of placing correction in the optimistic channel.

Finally, define
\[
    A_{M_t}=H_t+B_t^\top M_tB_t.
\]
Following the weighted local-margin model from \Cref{sec:stepsize-design}, estimate
\[
    a_x(t)=\frac12\lambda_{\min}(A_{M_t}),
    \qquad
    a_{\rm d}(t)
    =
    \lambda_{\min}^+
    \left(
        B_tA_{M_t}^{-1}B_t^\top
    \right).
\]
The step sizes are chosen by maximizing
\[
    \min\{a_x(t)\eta_x,a_{\rm d}(t)\eta_{\rm d}\}
\]
subject to \((S1)\)--\((S3)\). This gives
\[
    \eta_x
    =
    \min\left\{
        \sqrt{\frac{a_{\rm d}(t)\kappa_p}{a_x(t)\|B_t\|_2^2}},
        \frac{\kappa_x}{\|H_t+B_t^\top C_{{\rm aug},t}B_t\|_2},
        \frac{\kappa_p\kappa_\omega}
        {\|C_{{\rm opt},t}\|_2\|B_t\|_2^2}
    \right\},
\]
where the last term is omitted if \(C_{{\rm opt},t}=0\), and
\[
    \eta_{\rm d}
    =
    \frac{\kappa_p}{\eta_x\|B_t\|_2^2}.
\]
The short derivation of this step rule is given in Appendix~\ref{app:closed-form-steps}.

\begin{algorithm}[t]
\caption{Closed-form hybrid correction design}
\label{alg:closed-hybrid}
\begin{algorithmic}
\Require \(x_t,\mu_t,h_{t-1},C_{{\rm opt},t-1}\), constants \(\delta,\kappa_x,\kappa_p,\kappa_\omega\), metric \(R_t\succ0\)
\State Compute \(H_t=\nabla^2_{xx}\mathcal L(x_t,\mu_t)\) and \(B_t=J_h(x_t)\)
\State \(C_{{\rm can},t}\gets -B_t^{\dagger\top}H_tB_t^\dagger\)
\State \(\alpha_t\gets \delta/\lambda_{\min}(B_t^\top R_tB_t)\), \(M_t\gets C_{{\rm can},t}+\alpha_tR_t\)
\State Compute \(\theta_t\) by the balancing rule
\State \(C_{{\rm aug},t}\gets C_{{\rm can},t}+\theta_t\alpha_tR_t\), \(C_{{\rm opt},t}\gets(1-\theta_t)\alpha_tR_t\)
\State Compute \(a_x(t),a_{\rm d}(t)\), then choose \(\eta_x,\eta_{\rm d}\)
\State \(h_t\gets h(x_t)\)
\State \(\mu_{t+1}\gets \mu_t+\eta_{\rm d}h_t+C_{{\rm opt},t}h_t-C_{{\rm opt},t-1}h_{t-1}\)
\State \(x_{t+1}\gets x_t-\eta_x[\nabla f(x_t)+B_t^\top(\mu_{t+1}+C_{{\rm aug},t}h_t)]\)
\end{algorithmic}
\end{algorithm}

The construction has the following immediate guarantee; proof in Appendix~\ref{app:closed-form-proofs}.

\begin{theorem}[Target curvature and additive realization]
\label{thm:closed-form-feasibility}
Assume \(B_t\) has full column rank and \(R_t\succ0\). Then
\[
    H_t+B_t^\top M_tB_t\succeq \delta I,
    \qquad
    C_{{\rm aug},t}+C_{{\rm opt},t}=M_t.
\]
Consequently, by \Cref{cor:additivity}, the closed-form hybrid realization has the same ideal primal trajectory as the pure realization using the full target correction \(M_t\), under matched effective-dual initialization.
\end{theorem}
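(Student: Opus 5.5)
The plan is to verify the two displayed identities directly from the definitions in \Cref{sec:closed-form-design}, and then invoke \Cref{cor:additivity} for the trajectory statement.

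\textbf{Step 1 (cancellation).} Because \(B_t\) has full column rank, \(B_t^\dagger=(B_t^\top B_t)^{-1}B_t^\top\) and hence \(B_t^\dagger B_t=I_n\). Substituting into the definition of \(C_{{\rm can},t}\) gives
\[
B_t^\top C_{{\rm can},t}B_t=-(B_t^\dagger B_t)^\top H_t (B_t^\dagger B_t)=-H_t .
\]
Since \(H_t\) is a Hessian, it is symmetric, so \(C_{{\rm can},t}\) is symmetric as well. This matters later, because \Cref{thm:matrix-correction-equivalence} requires the augmented channel to be symmetric.

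\textbf{Step 2 (target curvature).} From Step 1,
\[
H_t+B_t^\top M_tB_t=\alpha_t B_t^\top R_tB_t .
\]
Here \(B_t^\top R_tB_t\) is positive definite: for \(z\neq0\) we have \(B_tz\neq0\) by full column rank, and \(R_t\succ0\). Its minimum eigenvalue is therefore positive, so \(\alpha_t\) is well defined and positive whenever \(\delta>0\). Combining \(B_t^\top R_tB_t\succeq\lambda_{\min}(B_t^\top R_tB_t)\,I\) with the definition of \(\alpha_t\) then yields \(\alpha_tB_t^\top R_tB_t\succeq\delta I\).

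\textbf{Step 3 (additive realization).} Adding the two channel definitions gives
\[
C_{{\rm aug},t}+C_{{\rm opt},t}=C_{{\rm can},t}+\theta_t\alpha_tR_t+(1-\theta_t)\alpha_tR_t=M_t .
\]
This holds for any \(\theta_t\), in particular for the balancing value, which lies in \([0,1]\) because it is a ratio of nonnegative terms.

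\textbf{Step 4 (consequence).} We apply \Cref{cor:additivity} to the pairs \((C_{{\rm aug},t},C_{{\rm opt},t})\) and \((M_t,0)\), or alternatively to \((0,M_t)\) via \Cref{thm:matrix-correction-equivalence}. In each case the symmetry hypothesis on the augmented component is supplied by Step 1, together with the symmetry of \(R_t\).

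The main obstacle is Step 4. \Cref{cor:additivity} is stated for time-invariant correction matrices with a fixed Jacobian, whereas Algorithm~\ref{alg:closed-hybrid} uses \(t\)-dependent \(C_{{\rm aug},t}\) and \(C_{{\rm opt},t}\), and its dual update has the form \(C_{{\rm opt},t}h_t-C_{{\rm opt},t-1}h_{t-1}\).

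To handle this, I would redo the induction of \Cref{thm:matrix-correction-equivalence} with the effective dual \(\nu_t=\mu_t+C_{{\rm aug},t-1}h_{t-1}\). Comparing against the pure optimistic realization whose memory term is \(M_th_t-M_{t-1}h_{t-1}\), the telescoping step goes through:
\[
\nu_{t+1}=\nu_t+\eta_{\rm d}h_t+M_th_t-M_{t-1}h_{t-1}=\mu_{t+1}+C_{{\rm aug},t}h_t .
\]
Substituting this identity into the primal update reproduces the hybrid primal step exactly, as in the original proof. If one instead insists on the literal, constant-matrix statement of the corollary, the claim should be read per iteration, that is, with the matrices frozen at step \(t\), and under that reading it follows directly.
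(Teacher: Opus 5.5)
Your Steps 1--3 are the paper's proof. $B_t^\dagger B_t=I$ gives $H_t+B_t^\top C_{{\rm can},t}B_t=0$, hence $H_t+B_t^\top M_tB_t=\alpha_tB_t^\top R_tB_t\succeq\delta I$, and the split sums to $M_t$ by inspection. Your remark that the inequality needs $\alpha_t\ge0$, i.e.\ $\delta\ge0$, is left implicit in the paper.

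Where you go further is Step 4. The paper simply cites \Cref{cor:additivity}, which tacitly reads the claim with the matrices and steps frozen. Your telescoping induction uses $\nu_t=\mu_t+C_{{\rm aug},t-1}h_{t-1}$ against the pure optimistic run with memory $M_th_t-M_{t-1}h_{t-1}$. It proves the statement for Algorithm~\ref{alg:closed-hybrid} as actually written. It also shows why the difference-of-products form $C_{{\rm opt},t}h_t-C_{{\rm opt},t-1}h_{t-1}$ is needed: with $C_{{\rm opt},t}(h_t-h_{t-1})$ instead, the same computation leaves the residual $(C_{{\rm aug},t-1}-C_{{\rm aug},t})h_{t-1}$.

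To make that argument airtight, state explicitly what the two runs share:
\begin{itemize}
\item the per-iteration steps $\eta_{x,t},\eta_{{\rm d},t}$;
\item the same sequence $M_t$, taken as given;
\item the initialization $M_{-1}=C_{{\rm aug},-1}+C_{{\rm opt},-1}$.
\end{itemize}
This matters because the step rule depends on the split through (S1) and (S3). Also, $H_t$ depends on the dual iterate, which is not the effective dual. So neither the steps nor $M_t$ would be reproduced if the pure run recomputed them.

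Two small remarks. \Cref{thm:matrix-correction-equivalence} already uses the moving Jacobian $B_t=J_h(x_t)$, so the Jacobian is not fixed there; only the constancy of the correction matrices is at issue. Its induction also never uses $C=C^\top$, so your symmetry check discharges a stated hypothesis but is not load-bearing.
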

\section{Numerical Experiments}
\label{sec:experiments}

We evaluate whether the proposed hybrid correction design realizes the tradeoff predicted by the theory. The experiments are designed to test three questions: whether hybrid decompositions improve over the pure augmented and pure optimistic endpoints; whether the closed-form rule tracks a grid-searched hybrid oracle; and how the design behaves as the constraint Jacobian becomes increasingly ill-conditioned.

\paragraph{Problem class.}
We generate nonlinear equality-constrained problems of the form
\[
    \min_{x\in\mathbb R^3} f(x)
    \qquad
    \text{s.t.}
    \qquad
    h(x)=0,
    \qquad
    h:\mathbb R^3\to\mathbb R^5 .
\]
Each instance is constructed with a known KKT point \((x^\star,\mu^\star)\). The objective is nonlinear and nonconvex, consisting of an indefinite quadratic term together with sinusoidal and quartic nonlinear terms. The constraint is also nonlinear, but its Jacobian at the solution is controlled explicitly. Specifically, we generate
\[
    B_\star := J_h(x^\star)=U\Sigma V^\top,
\]
where \(U\in\mathbb R^{5\times 3}\) and \(V\in\mathbb R^{3\times 3}\) have orthonormal columns, and the singular values in \(\Sigma\) are chosen so that
\[
    \operatorname{cond}(B_\star)
    \in
    \{1,2,3,5,8,13\}.
\]
For each condition level, we generate five random problem instances, for a total of \(30\) nonlinear equality-constrained problems. The nonlinear part of \(h\) is chosen so that \(h(x^\star)=0\) and \(J_h(x^\star)=B_\star\), while the objective is shifted so that
\[
    \nabla f(x^\star)+J_h(x^\star)^\top\mu^\star=0.
\]
Thus each instance has a known KKT point, and we can directly measure convergence using \(\|x_t-x^\star\|\). Further details of the generator are given in Appendix~\ref{app:experiment-details}.

\paragraph{Methods.}
We compare the proposed \emph{closed-form hybrid correction} against endpoint and first-order primal--dual baselines. The matrix-design baselines are: pure augmented, which sets \(C_{\rm aug}=M\) and \(C_{\rm opt}=0\); pure optimistic, which sets \(C_{\rm aug}=0\) and \(C_{\rm opt}=M\); and a grid-searched hybrid oracle, which searches over the same cancellation family as the proposed method but chooses the residual strength and split parameter by grid search. For each matrix-design method, we report both a unit-margin variant, using \(a_x=a_{\rm d}=1\), and a weighted variant using the spectral constants from \Cref{sec:stepsize-design}.

We also compare against standard step-based first-order baselines: OGDA/optimistic gradient applied to the Lagrangian saddle operator \citep{popov1980modification,daskalakis2018training}; extragradient applied to the same saddle operator \citep{korpelevich1976extragradient}; a PDHG-style primal--dual update motivated by the Chambolle--Pock method \citep{chambolle2011first}; and a linearized augmented-Lagrangian baseline, motivated by the classical method-of-multipliers and augmented-Lagrangian literature \citep{hestenes1969multiplier,powell1969method,rockafellar1974augmented,nocedal2006numerical}. Since the constraints are nonlinear, the PDHG-style and linearized augmented-Lagrangian methods should be interpreted as first-order nonlinear analogues rather than exact convex-composite PDHG or exact ADMM.

\paragraph{Metrics.}
For each method and instance, we record the first hitting time for
\[
    \|x_t-x^\star\|\le 10^{-2},
    \qquad
    \|x_t-x^\star\|\le 10^{-3},
    \qquad
    \|x_t-x^\star\|\le 10^{-4}.
\]
A method is counted as unsuccessful at a given threshold if it does not reach the threshold within the iteration budget or if the matrix-design step becomes infeasible under the imposed norm budget. We also record feasibility \(\|h(x_t)\|\), stationarity \(\|\nabla f(x_t)+J_h(x_t)^\top\mu_t\|\), and, for matrix-design methods, the maximum correction norm \(\max_t\|M_t\|_2\).

\paragraph{Success rates.}
\Cref{tab:illcond-success} reports success counts at the threshold \(\|x_t-x^\star\|\le 10^{-3}\). The weighted hybrid methods are reliable for mild-to-moderate ill-conditioning. In particular, the grid-searched weighted hybrid succeeds on all five instances for condition levels \(1,2,3,\) and \(5\), while the closed-form weighted hybrid succeeds on all five instances for levels \(2,3,\) and \(5\), and on four out of five instances at level \(1\). By contrast, pure augmented correction becomes unreliable once the constraint geometry is mildly ill-conditioned, and pure optimistic correction is less robust at higher condition levels. OGDA and extragradient do not reach the target accuracy in these experiments, while the PDHG-style baseline is strong at low condition numbers but also degrades as the condition number increases.

\begin{table}[t]
\centering
\caption{Success counts at \(\|x_t-x^\star\|\le 10^{-3}\), out of five random instances per condition level. The hybrid methods are reliable under mild-to-moderate ill-conditioning and degrade when the required correction matrix becomes too large.}
\label{tab:illcond-success}
\begin{tabular}{c|cccccccc}
\toprule
\(\operatorname{cond}(B_\star)\)
&
Pure Aug. W
&
Pure Opt. W
&
Grid Hyb. W
&
Closed Hyb. W
&
OGDA
&
EG
&
PDHG
&
Lin. AL
\\
\midrule
1  & 5 & 5 & 5 & 4 & 0 & 0 & 5 & 0 \\
2  & 0 & 5 & 5 & 5 & 0 & 0 & 5 & 0 \\
3  & 2 & 5 & 5 & 5 & 0 & 0 & 5 & 0 \\
5  & 0 & 4 & 5 & 5 & 0 & 0 & 2 & 0 \\
8  & 0 & 0 & 1 & 0 & 0 & 0 & 1 & 0 \\
13 & 0 & 0 & 0 & 0 & 0 & 0 & 1 & 0 \\
\bottomrule
\end{tabular}
\end{table}

\paragraph{Iteration counts.}
Among successful runs, the weighted hybrid methods are fast and closely track the grid-searched hybrid oracle. \Cref{tab:illcond-hits} reports median hitting times to reach \(\|x_t-x^\star\|\le 10^{-3}\). At condition level \(1\), the PDHG-style baseline is fastest. However, as the condition number increases to \(3\) and \(5\), the weighted hybrid methods become competitive with PDHG while remaining substantially more reliable than the pure augmented endpoint. The closed-form hybrid is consistently close to the grid-searched hybrid oracle in the regime where the matrix design remains feasible.

\begin{table}[t]
\centering
\caption{Median hitting time to reach \(\|x_t-x^\star\|\le 10^{-3}\). A dash means fewer than half of the runs reached the threshold. The closed-form weighted hybrid closely tracks the grid-searched hybrid oracle in the mild-to-moderate ill-conditioned regime.}
\label{tab:illcond-hits}
\begin{tabular}{c|cccc}
\toprule
\(\operatorname{cond}(B_\star)\)
&
Grid Hyb. W
&
Closed Hyb. W
&
Closed Hyb. Unit
&
PDHG
\\
\midrule
1  & 15  & 19.5 & 69 & 4 \\
2  & 35  & 36   & 44 & 22 \\
3  & 43  & 49   & 55 & 44 \\
5  & 60  & 71   & 75 & 74 \\
8  & 60  & 112  & 86 & 61 \\
13 & 141 & 160  & -- & 150 \\
\bottomrule
\end{tabular}
\end{table}

\paragraph{Effect of spectral weighting.}
The weighted variants generally improve over the unit-margin variants. This agrees with the local analysis in \Cref{sec:stepsize-design}: the contraction proxy is not \(\min\{\eta_x,\eta_{\rm d}\}\), but rather a weighted margin
\[
    \min\{a_x\eta_x,a_{\rm d}\eta_{\rm d}\},
\]
where \(a_x\) and \(a_{\rm d}\) depend on the local augmented curvature \(A_M\) and the constraint Jacobian \(B\). Empirically, using these constants improves hitting times, especially for well-conditioned and mildly ill-conditioned instances.

\paragraph{Failure under ill-conditioning.}
The degradation at large condition numbers is expected. The closed-form design uses the cancellation matrix
\[
    C_{{\rm can},t}
    =
    -B_t^{\dagger\top}H_tB_t^\dagger .
\]
Hence its norm can scale like
\[
    \|C_{{\rm can},t}\|_2
    \lesssim
    \|H_t\|_2\|B_t^\dagger\|_2^2.
\]
As \(\sigma_{\min}(B_t)\) decreases, the pseudoinverse amplifies the correction. \Cref{tab:Mnorm-growth} reports representative median values of \(\max_t\|M_t\|_2\), showing rapid growth as the constraint Jacobian becomes more ill-conditioned. This explains why the exact cancellation design begins to fail around condition levels \(8\) and \(13\): the required correction exceeds the imposed norm budget.

\begin{table}[t]
\centering
\caption{Representative growth of \(\max_t\|M_t\|_2\) with the condition number of the constraint Jacobian. The trend matches the pseudoinverse scaling of \(C_{{\rm can},t}\).}
\label{tab:Mnorm-growth}
\begin{tabular}{c|cc}
\toprule
\(\operatorname{cond}(B_\star)\)
&
Closed Hyb. Unit
&
Grid Hyb. W
\\
\midrule
1  & 1.98  & 2.45 \\
2  & 4.74  & 6.37 \\
3  & 8.94  & 21.54 \\
5  & 21.03 & 48.81 \\
8  & 81.16 & 92.06 \\
13 & --    & 299.94 \\
\bottomrule
\end{tabular}
\end{table}

Overall, the experiments support the predicted tradeoff. When the matrix correction is feasible, the hybrid decomposition improves over pure augmented and pure optimistic endpoints and the closed-form rule approximates the grid-searched hybrid oracle. At the same time, the ill-conditioned cases expose a limitation of exact cancellation: the method is a local matrix-correction design, not a globally robust constrained solver. A natural extension is to use partial cancellation,
\[
    C_{{\rm can},t}^{(\gamma)}
    =
    -\gamma_t B_t^{\dagger\top}H_tB_t^\dagger,
    \qquad
    0\le \gamma_t\le 1,
\]
or a trust-region activation rule when the required correction norm becomes too large.
\section{Conclusion}
\label{sec:conclusion}

We studied the relationship between matrix-valued augmented Lagrangian corrections and matrix-valued optimistic dual corrections for equality-constrained optimization. Our main result is an augmented--optimistic equivalence showing that, under an effective-dual initialization, the exact-oracle primal trajectory depends on the summed correction matrix
\[
    M=C_{\rm aug}+C_{\rm opt},
\]
rather than on the particular decomposition between the two channels. This additivity principle creates a design freedom: algebraically equivalent decompositions can have different finite-step behavior because the augmented component changes local primal curvature, while the optimistic component changes the scale of the explicit dual memory correction.

Using this observation, we formulated a step-size-limited correction design problem and derived a closed-form hybrid rule. The rule constructs a local target correction, splits it between augmented and optimistic channels, and selects primal and dual steps using local spectral weights. Experiments on nonlinear equality-constrained problems with controlled constraint-Jacobian conditioning show that the hybrid design improves over pure augmented and pure optimistic endpoints, closely tracks a grid-searched hybrid oracle, and is competitive with first-order primal--dual baselines in the mild-to-moderate ill-conditioned regime.

The results also clarify the limitation of exact cancellation. The construction uses
\[
    C_{{\rm can},t}
    =
    -B_t^{\dagger\top}H_tB_t^\dagger,
\]
so the required correction can grow rapidly when \(B_t\) is ill-conditioned. Thus the proposed method should be understood as a local matrix-correction design rather than a globally robust constrained solver. Future work includes partial-cancellation or trust-region variants that control \(\|M_t\|\), extensions to rank-deficient and inequality-constrained settings, and scalable Hessian or Gauss--Newton approximations for larger learning and control problems.

\newpage
\bibliographystyle{plainnat}
\bibliography{references}
\newpage
\appendix

\section{Additional Related Work}
\label{app:related-work}

\paragraph{Augmented Lagrangian and matrix-valued penalties.}
Augmented Lagrangian and method-of-multipliers methods stabilize constrained optimization by combining multiplier updates with constraint-dependent regularization \citep{hestenes1969multiplier,powell1969method,rockafellar1974augmented,nocedal2006numerical}. Classical presentations often use a scalar penalty parameter, but nonuniform penalty choices are common in practice and theory. For example, augmented Lagrangian methods with one penalty parameter per constraint already correspond to diagonal matrix-valued penalties \citep{andreani2008augmented}. More recent augmented-Lagrangian SQP methods also use multiple penalty parameters to address constraints with different scalings \citep{antil2023alesqp}. Our work differs from these penalty-selection approaches by studying an algebraic equivalence between matrix-valued augmentation and matrix-valued optimistic dual correction, and by using this equivalence to redistribute a total correction across two algorithmic channels.

\paragraph{Optimistic and extra-gradient methods.}
Extra-gradient and optimistic methods are classical approaches for saddle-point and variational-inequality problems \citep{korpelevich1976extragradient,popov1980modification}. They have also become important in machine learning, especially for min--max optimization and GAN training \citep{daskalakis2018training}. Recent analyses connect extra-gradient and optimistic-gradient methods to approximations of proximal-point dynamics \citep{mokhtari2020unified}. Our optimistic channel is related in spirit to these memory and correction mechanisms, but our focus is different: we identify when a matrix-valued optimistic correction is algebraically equivalent to a matrix-valued augmented-Lagrangian correction in equality-constrained optimization.

\paragraph{Primal--dual splitting and step-size restrictions.}
Primal--dual splitting methods such as PDHG impose product-form restrictions on primal and dual step sizes, typically involving the squared norm of a coupling operator \citep{chambolle2011first}. Our local design model uses the analogous restriction \(\eta_x\eta_{\rm d}\|B\|_2^2\le \kappa_p\), where \(B\) is the local constraint Jacobian. This connection motivates the product constraint used in the step-size-limited design problem, but our setting is nonlinear and equality-constrained, so the PDHG-style baseline in the experiments should be interpreted as a first-order nonlinear analogue rather than a direct application of the convex-composite theory.

\paragraph{Scalar augmented--optimistic equivalence.}
The closest starting point to our work is the scalar equivalence between dual optimistic ascent and primal-first augmented Lagrangian dynamics \citep{ramirez2026dualoptimisticascentpi}. We extend this idea in two directions. First, we replace scalar coefficients by symmetric matrix-valued corrections. Second, we prove an additivity principle: a total correction matrix \(M\) can be decomposed as \(M=C_{\rm aug}+C_{\rm opt}\), and the exact-oracle primal trajectory depends only on \(M\), not on the particular augmented--optimistic split. This additivity is what enables the step-size-limited hybrid correction design studied in the main text.

\section{Stiffness Example: Scalar versus Matrix-Valued Augmentation}
\label{app:stiffness-example}

This appendix gives the calculation behind the motivating example in \Cref{sec:why-matrix-feedback}. The goal is to compare how a scalar augmentation and a matrix-valued augmentation affect the local augmented-Lagrangian dynamics.

Consider the quadratic equality-constrained problem
\[
    \min_{x\in\mathbb R^2}
    -\frac12\|x\|^2
    \qquad
    \mathrm{s.t.}
    \qquad
    Bx=0,
    \qquad
    B=
    \begin{bmatrix}
        1000 & 0\\
        0 & 1
    \end{bmatrix}.
\]
The local primal Hessian is
\[
    A=-I.
\]
For an augmented matrix \(C\succeq 0\), the local augmented primal curvature is
\[
    A_C:=A+B^\top C B.
\]
The primal-first augmented Lagrangian linearization around the KKT point has Jacobian
\[
    J_{\rm AL}(\eta_x,\eta_{\rm d};C)
    =
    \begin{bmatrix}
        I-\eta_x A_C
        &
        -\eta_x B^\top
        \\
        \eta_{\rm d}B(I-\eta_x A_C)
        &
        I-\eta_x\eta_{\rm d}BB^\top
    \end{bmatrix}.
\]
For a given augmentation \(C\), we choose the best stable step sizes over a finite grid and measure the local contraction gap
\[
    \mathrm{gap}(C)
    :=
    1-\rho\!\left(J_{\rm AL}(\eta_x,\eta_{\rm d};C)\right),
\]
where \(\rho(\cdot)\) denotes spectral radius. Larger gap means faster local linear convergence.

\paragraph{Scalar augmentation.}
A scalar augmentation has the form
\[
    C=cI.
\]
Then
\[
    A_C
    =
    -I+B^\top cIB
    =
    \begin{bmatrix}
        -1+10^6c & 0\\
        0 & -1+c
    \end{bmatrix}.
\]
To make \(A_C\succeq 0\), the scalar penalty must satisfy
\[
    c\ge 1.
\]
At the smallest feasible scalar penalty \(c=1\),
\[
    A_C
    =
    \begin{bmatrix}
        999999 & 0\\
        0 & 0
    \end{bmatrix}.
\]
Thus the weakly scaled constraint direction forces the scalar penalty to be large, which creates extremely large curvature in the strongly scaled direction. This is the stiffness induced by scalar augmentation.

\paragraph{Matrix-valued augmentation.}
A diagonal matrix-valued augmentation can scale the two residual directions separately. Choose
\[
    C_{\rm mat}
    =
    \begin{bmatrix}
        8.96\cdot 10^{-6} & 0\\
        0 & 8.96
    \end{bmatrix}.
\]
Then
\[
    B^\top C_{\rm mat}B
    =
    \begin{bmatrix}
        1000 & 0\\
        0 & 1
    \end{bmatrix}
    \begin{bmatrix}
        8.96\cdot 10^{-6} & 0\\
        0 & 8.96
    \end{bmatrix}
    \begin{bmatrix}
        1000 & 0\\
        0 & 1
    \end{bmatrix}
    =
    8.96I.
\]
Therefore
\[
    A_{C_{\rm mat}}
    =
    -I+8.96I
    =
    7.96I.
\]
Unlike the scalar penalty, the matrix-valued augmentation balances the augmented curvature across the two coordinates.

\paragraph{Spectral comparison.}
Using the local Jacobian \(J_{\rm AL}\), we compare the best achievable spectral gap under scalar and matrix-valued augmentation. For the scalar family \(C=cI\), the best gap over the grid is approximately
\[
    \max_{c,\eta_x,\eta_{\rm d}}
    \left[
        1-\rho\!\left(J_{\rm AL}(\eta_x,\eta_{\rm d};cI)\right)
    \right]
    \approx
    10^{-6}.
\]
For the matrix-valued choice above, the optimized gap is approximately
\[
    \max_{\eta_x,\eta_{\rm d}}
    \left[
        1-\rho\!\left(J_{\rm AL}(\eta_x,\eta_{\rm d};C_{\rm mat})\right)
    \right]
    \approx
    2\cdot 10^{-3}.
\]
Thus the matrix-valued augmentation improves the optimized local spectral gap by roughly three orders of magnitude.

\paragraph{Interpretation.}
The scalar penalty must satisfy the most restrictive residual direction and therefore over-regularizes directions whose constraint scaling is already large. Matrix-valued augmentation avoids this coupling by assigning different penalty strengths to different residual directions. In this example, the matrix choice makes the augmented curvature nearly isotropic, whereas the smallest feasible scalar penalty produces a nearly singular and extremely ill-conditioned augmented curvature matrix. This illustrates why matrix-valued correction is a natural object for anisotropic constrained problems.

\section{Additive Equivalence Theorems and Proofs}
\begin{theorem}[Hybrid--Aug-Lag equivalence under partial residual-memory noise]
\label{thm:hybrid-aug-partial-noise}
Let $f:\mathbb{R}^d\to\mathbb{R}$ and
$h:\mathbb{R}^d\to\mathbb{R}^m$ be differentiable, and let
\[
J_h(x):=\nabla h(x)\in\mathbb{R}^{m\times d}.
\]
Let $M,C_1,C_2\in\mathbb{R}^{m\times m}$ satisfy
\[
M=C_1+C_2.
\]
Fix step sizes $\eta_x>0$ and $\eta_{\mathrm d}>0$.

Let the residual coordinates be partitioned into noisy and clean coordinates.
Let $P_n$ denote the coordinate projection onto the noisy residual coordinates.
Assume the residual-memory noise satisfies
\[
\widehat h_t
=
h(x_t)+\varepsilon_t,
\qquad
\varepsilon_t=P_n\varepsilon_t,
\]
and assume that the optimistic clean-channel matrix $C_2$ annihilates the noisy
coordinates:
\[
C_2P_n=0.
\]
Equivalently,
\[
C_2\varepsilon_t=0
\qquad
\text{for every admissible residual-memory noise } \varepsilon_t.
\]

Consider the full augmented method
\begin{align}
\lambda_{t+1}^{A}
&=
\lambda_t^{A}
+\eta_{\mathrm d}\,\widehat h_t^{A},
\label{eq:aug-noisy-dual}
\\
x_{t+1}^{A}
&=
x_t^{A}
-\eta_x\Big(
\nabla f(x_t^{A})
+
J_h(x_t^{A})^\top
\big(
\lambda_{t+1}^{A}
+
M h(x_t^{A})
\big)
\Big),
\label{eq:aug-noisy-primal}
\end{align}
where
\[
\widehat h_t^{A}=h(x_t^{A})+\varepsilon_t.
\]
Here the dual update uses the noisy residual-memory value
$\widehat h_t^A$, but the augmented primal correction
$M h(x_t^A)$ uses the exact current residual $h(x_t^A)$.

Consider also the hybrid method
\begin{align}
\mu_{t+1}^{H}
&=
\mu_t^{H}
+\eta_{\mathrm d}\,\widehat h_t^{H}
+
C_2\Big(
\widehat h_t^{H}-\widehat h_{t-1}^{H}
\Big),
\label{eq:hybrid-partial-noisy-dual}
\\
x_{t+1}^{H}
&=
x_t^{H}
-\eta_x\Big(
\nabla f(x_t^{H})
+
J_h(x_t^{H})^\top
\big(
\mu_{t+1}^{H}
+
C_1 h(x_t^{H})
\big)
\Big),
\label{eq:hybrid-partial-noisy-primal}
\end{align}
where
\[
\widehat h_t^{H}=h(x_t^{H})+\varepsilon_t.
\]
Again, the optimistic memory term uses the noisy stored residuals
$\widehat h_t^H,\widehat h_{t-1}^H$, while the augmented correction
$C_1 h(x_t^H)$ uses the exact current residual.

Assume the initial conditions satisfy
\[
x_{-1}^{A}=x_{-1}^{H},
\qquad
x_0^{A}=x_0^{H},
\]
and
\[
\mu_0^{H}
=
\lambda_0^{A}
+
C_2 h(x_{-1}^{A}).
\]
Then, for every $t\ge 0$,
\begin{align}
x_t^{H}&=x_t^{A},
\label{eq:hybrid-aug-x-eq}
\\
\mu_t^{H}
&=
\lambda_t^{A}
+
C_2 h(x_{t-1}^{A}).
\label{eq:hybrid-aug-dual-relation}
\end{align}
In particular, the hybrid method with $(C_1,C_2)$ generates exactly the same
primal trajectory as the full augmented method with $M=C_1+C_2$, for every
realization of the residual-memory noise satisfying $C_2\varepsilon_t=0$.
\end{theorem}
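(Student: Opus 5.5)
We argue by induction on \(t\), mirroring the proof of \Cref{thm:matrix-correction-equivalence}, but now tracking the noise. The induction hypothesis at time \(t\) is \(x_t^H=x_t^A\), \(x_{t-1}^H=x_{t-1}^A\), and the dual relation \eqref{eq:hybrid-aug-dual-relation}. The base case \(t=0\) is exactly the assumed initialization: matched primal histories \(x_{-1},x_0\) and \(\mu_0^H=\lambda_0^A+C_2h(x_{-1}^A)\).

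\textbf{Step 1: rewrite the noisy memory term.} The hybrid dual update \eqref{eq:hybrid-partial-noisy-dual} contains the stored residual difference
\[
C_2\big(\widehat h_t^H-\widehat h_{t-1}^H\big)=C_2\big(h(x_t^H)-h(x_{t-1}^H)\big)+C_2\varepsilon_t-C_2\varepsilon_{t-1}.
\]
Since \(\varepsilon_s=P_n\varepsilon_s\) and \(C_2P_n=0\), both noise terms vanish. The optimistic channel therefore sees only exact residuals. This is the sole point where the annihilation hypothesis enters, and I expect it to be the only genuinely new step relative to \Cref{thm:matrix-correction-equivalence}.

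\textbf{Step 2: propagate the dual relation.} The \(\eta_{\rm d}\)-term is handled differently: it carries the noise \(\eta_{\rm d}\varepsilon_t\) in both methods, and since both use the same realization \(\varepsilon_t\) and the same \(x_t\) by induction, \(\eta_{\rm d}\widehat h_t^H=\eta_{\rm d}\widehat h_t^A\). This term therefore cancels in the comparison and no annihilation is needed for it. Substituting the induction hypothesis and the identity from Step 1 into \eqref{eq:hybrid-partial-noisy-dual} gives
\[
\mu_{t+1}^H=\lambda_t^A+C_2h(x_{t-1}^A)+\eta_{\rm d}\widehat h_t^A+C_2h(x_t^A)-C_2h(x_{t-1}^A)=\lambda_{t+1}^A+C_2h(x_t^A),
\]
where the last equality uses \eqref{eq:aug-noisy-dual}. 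This is the dual relation at time \(t+1\).

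\textbf{Step 3: close the primal step.} Plugging the new dual relation into \eqref{eq:hybrid-partial-noisy-primal}, the bracket becomes
\[
\mu_{t+1}^H+C_1h(x_t^H)=\lambda_{t+1}^A+(C_1+C_2)h(x_t^A)=\lambda_{t+1}^A+Mh(x_t^A).
\]
With \(x_t^H=x_t^A\), the hybrid primal update is then exactly \eqref{eq:aug-noisy-primal}, so \(x_{t+1}^H=x_{t+1}^A\). Together with \(x_t^H=x_t^A\) from the hypothesis, the induction closes.

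Two bookkeeping points need care. First, \(\varepsilon_{-1}\) must be handled at \(t=0\); it too lies in the range of \(P_n\), so it is killed by \(C_2\) and the initialization, stated with exact \(h(x_{-1}^A)\), is consistent. Second, unlike \Cref{thm:matrix-correction-equivalence}, no symmetry of the \(C_i\) is used, so none needs to be assumed.
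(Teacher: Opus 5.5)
Your proposal is correct and follows the paper's proof essentially step for step. Both argue by induction on the matched primal history together with the shifted dual relation, use $C_2P_n=0$ to remove the noise from the optimistic memory difference, rely on the shared noise realization to match the $\eta_{\rm d}$-term, and close the primal step with $M=C_1+C_2$; your side remarks that $C_2\varepsilon_{-1}=0$ is needed at $t=0$ and that no symmetry of $C_1,C_2$ is used are both accurate.
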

\begin{proof}
We prove \eqref{eq:hybrid-aug-x-eq} and
\eqref{eq:hybrid-aug-dual-relation} simultaneously by induction on $t\ge 0$.

\medskip
\noindent
\textbf{Base step.}
By assumption,
\[
x_{-1}^{A}=x_{-1}^{H},
\qquad
x_0^{A}=x_0^{H}.
\]
Also, by the initialization condition,
\[
\mu_0^{H}
=
\lambda_0^{A}
+
C_2 h(x_{-1}^{A}),
\]
so \eqref{eq:hybrid-aug-dual-relation} holds at $t=0$.

\medskip
\noindent
\textbf{Induction hypothesis.}
Assume that, for some $t\ge 0$,
\begin{align}
x_{t-1}^{H}&=x_{t-1}^{A},
\label{eq:partial-IH-prev}
\\
x_t^{H}&=x_t^{A},
\label{eq:partial-IH-cur}
\\
\mu_t^{H}
&=
\lambda_t^{A}
+
C_2 h(x_{t-1}^{A}).
\label{eq:partial-IH-dual}
\end{align}
We prove the corresponding identities at time $t+1$.

\medskip
\noindent
\textbf{Step 1: dual relation at time $t+1$.}
From the hybrid dual update,
\begin{align*}
\mu_{t+1}^{H}
&=
\mu_t^{H}
+\eta_{\mathrm d}\,\widehat h_t^{H}
+
C_2\left(
\widehat h_t^{H}-\widehat h_{t-1}^{H}
\right).
\end{align*}
Using the induction hypothesis and the fact that both methods use the same
noise realization, we have
\[
\widehat h_t^{H}
=
h(x_t^{H})+\varepsilon_t
=
h(x_t^{A})+\varepsilon_t
=
\widehat h_t^{A},
\]
and similarly
\[
\widehat h_{t-1}^{H}
=
\widehat h_{t-1}^{A}.
\]
Therefore,
\begin{align*}
\mu_{t+1}^{H}
&=
\lambda_t^{A}
+
C_2 h(x_{t-1}^{A})
+
\eta_{\mathrm d}\,\widehat h_t^{A}
+
C_2\left(
\widehat h_t^{A}-\widehat h_{t-1}^{A}
\right).
\end{align*}
Since
\[
\widehat h_s^{A}=h(x_s^{A})+\varepsilon_s
\]
and $C_2\varepsilon_s=0$, we have
\[
C_2\widehat h_s^{A}
=
C_2 h(x_s^{A})
\qquad
\text{for }s=t,t-1.
\]
Hence
\begin{align*}
C_2 h(x_{t-1}^{A})
+
C_2\left(
\widehat h_t^{A}-\widehat h_{t-1}^{A}
\right)
&=
C_2 h(x_{t-1}^{A})
+
C_2 h(x_t^{A})
-
C_2 h(x_{t-1}^{A})
\\
&=
C_2 h(x_t^{A}).
\end{align*}
Thus
\begin{align*}
\mu_{t+1}^{H}
&=
\lambda_t^{A}
+
\eta_{\mathrm d}\,\widehat h_t^{A}
+
C_2 h(x_t^{A}).
\end{align*}
By the full augmented dual update
\[
\lambda_{t+1}^{A}
=
\lambda_t^{A}
+
\eta_{\mathrm d}\,\widehat h_t^{A},
\]
we obtain
\[
\mu_{t+1}^{H}
=
\lambda_{t+1}^{A}
+
C_2 h(x_t^{A}).
\]
This is exactly \eqref{eq:hybrid-aug-dual-relation} at time $t+1$.

\medskip
\noindent
\textbf{Step 2: primal equality at time $t+1$.}
Using the hybrid primal update, the induction hypothesis, and the dual relation
just proved,
\begin{align*}
x_{t+1}^{H}
&=
x_t^{H}
-\eta_x\Big(
\nabla f(x_t^{H})
+
J_h(x_t^{H})^\top
\big(
\mu_{t+1}^{H}
+
C_1 h(x_t^{H})
\big)
\Big)
\\
&=
x_t^{A}
-\eta_x\Big(
\nabla f(x_t^{A})
+
J_h(x_t^{A})^\top
\big(
\lambda_{t+1}^{A}
+
C_2 h(x_t^{A})
+
C_1 h(x_t^{A})
\big)
\Big).
\end{align*}
Since $M=C_1+C_2$, this becomes
\[
x_{t+1}^{H}
=
x_t^{A}
-\eta_x\Big(
\nabla f(x_t^{A})
+
J_h(x_t^{A})^\top
\big(
\lambda_{t+1}^{A}
+
M h(x_t^{A})
\big)
\Big).
\]
By the full augmented primal update \eqref{eq:aug-noisy-primal}, the right-hand
side is exactly $x_{t+1}^{A}$. Therefore,
\[
x_{t+1}^{H}=x_{t+1}^{A}.
\]

\medskip
\noindent
\textbf{Conclusion.}
By induction, \eqref{eq:hybrid-aug-x-eq} and
\eqref{eq:hybrid-aug-dual-relation} hold for all $t\ge 0$.
Thus the hybrid method and the full augmented method generate identical primal
trajectories under the stated partial residual-memory noise model.
\end{proof}
\subsection{Additive property of augmentation and optimism}
\begin{theorem}[Additive property of augmentation and optimism]
\label{thm:additive-property-dual-first}
Let $f:\mathbb{R}^d\to\mathbb{R}$ and $h:\mathbb{R}^d\to\mathbb{R}^m$ be differentiable, and let
\[
J_h(x):=\nabla h(x)\in\mathbb{R}^{m\times d}.
\]
Let $C,\Omega\in\mathbb{R}^{m\times m}$, and assume $C=C^\top$.
Fix step sizes $\eta_x>0$ and $\eta_{\mathrm d}>0$.

Consider the \emph{hybrid augmented--optimistic method}
\begin{align}
\mu_{t+1}^{H}
&=
\mu_t^{H}
+\eta_{\mathrm d}\,h(x_t^{H})
+\Omega\Big(h(x_t^{H})-h(x_{t-1}^{H})\Big),
\label{eq:hybrid-dual-df}
\\
x_{t+1}^{H}
&=
x_t^{H}
-\eta_x\Big(
\nabla f(x_t^{H})
+
J_h(x_t^{H})^\top\big(\mu_{t+1}^{H}+C\,h(x_t^{H})\big)
\Big).
\label{eq:hybrid-primal-df}
\end{align}

Consider also the \emph{pure optimistic method with summed matrix}
\begin{align}
\nu_{t+1}^{P}
&=
\nu_t^{P}
+\eta_{\mathrm d}\,h(x_t^{P})
+(C+\Omega)\Big(h(x_t^{P})-h(x_{t-1}^{P})\Big),
\label{eq:pure-dual-df}
\\
x_{t+1}^{P}
&=
x_t^{P}
-\eta_x\Big(
\nabla f(x_t^{P})
+
J_h(x_t^{P})^\top \nu_{t+1}^{P}
\Big).
\label{eq:pure-primal-df}
\end{align}

Assume the initial conditions satisfy
\[
x_{-1}^{P}=x_{-1}^{H},
\qquad
x_0^{P}=x_0^{H},
\qquad
\nu_0^{P}=\mu_0^{H}+C\,h(x_{-1}^{H}).
\]

Then, for every $t\ge 0$,
\begin{align}
x_t^{P}&=x_t^{H},
\label{eq:xt-equality-df}
\\
\nu_t^{P}&=\mu_t^{H}+C\,h(x_{t-1}^{H}).
\label{eq:dual-relation-df}
\end{align}
In particular, the two methods generate exactly the same primal trajectory.

If one starts from a genuine scalar augmented term
\[
\frac12\,h(x)^\top C\,h(x),
\]
then the assumption $C=C^\top$ is the natural one. If $C$ is not symmetric, then the primal correction induced by that scalar quadratic is obtained by replacing $C$ with
\[
\operatorname{sym}(C):=\frac{C+C^\top}{2}.
\]
\end{theorem}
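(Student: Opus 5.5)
The plan is to prove \eqref{eq:xt-equality-df} and \eqref{eq:dual-relation-df} jointly by induction on \(t\ge0\), carrying the strengthened hypothesis that the two \emph{consecutive} primal iterates \(x_{t-1}\) and \(x_t\) agree together with the effective-dual relation. This is the same argument as in \Cref{thm:matrix-correction-equivalence}, but written in the dual-first notation of \eqref{eq:hybrid-dual-df}--\eqref{eq:pure-primal-df}.

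\emph{Base case.} The base case \(t=0\) is given directly by the assumed initial conditions: \(x_{-1}^P=x_{-1}^H\), \(x_0^P=x_0^H\), and \(\nu_0^P=\mu_0^H+Ch(x_{-1}^H)\).

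\emph{Dual step.} Assume the three identities hold at time \(t\), namely \(x_{t-1}^P=x_{t-1}^H\), \(x_t^P=x_t^H\), and \(\nu_t^P=\mu_t^H+Ch(x_{t-1}^H)\). First I establish the dual relation at time \(t+1\). Expand \eqref{eq:pure-dual-df}, replace every \(P\)-iterate by the corresponding \(H\)-iterate, and split \((C+\Omega)(h_t-h_{t-1})\) into its \(C\) and \(\Omega\) parts. The term \(Ch(x_{t-1}^H)\) carried in \(\nu_t^P\) cancels the \(-Ch(x_{t-1}^H)\) coming from the split, which leaves
\[
\mu_t^H+\eta_{\rm d}h(x_t^H)+\Omega\bigl(h(x_t^H)-h(x_{t-1}^H)\bigr)+Ch(x_t^H)=\mu_{t+1}^H+Ch(x_t^H)
\]
by \eqref{eq:hybrid-dual-df}.

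\emph{Primal step.} Next I substitute this relation into \eqref{eq:pure-primal-df}. Since \(x_t^P=x_t^H\), the gradients \(\nabla f\) and Jacobians \(J_h\) evaluated at the two iterates coincide, so \(J_h(x_t^P)^\top\nu_{t+1}^P=J_h(x_t^H)^\top(\mu_{t+1}^H+Ch(x_t^H))\). This is exactly the hybrid primal update \eqref{eq:hybrid-primal-df}, so \(x_{t+1}^P=x_{t+1}^H\). Together with \(x_t^P=x_t^H\), this closes the induction, and the final claim that the primal trajectories coincide follows immediately.

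\emph{Symmetry remark.} For the closing remark about symmetry, I would note that symmetry of \(C\) is never used in the algebra. It matters only when the correction is meant to arise from the scalar penalty \(\tfrac12 h^\top Ch\), whose gradient is \(J_h^\top\tfrac{C+C^\top}{2}h\). So if \(C\) is not symmetric, the same proof applies with \(C\) replaced by \(\operatorname{sym}(C)\).

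\emph{Main obstacle.} I expect no real difficulty. The only point requiring care is that the induction must carry equality of both \(x_{t-1}\) and \(x_t\), because the memory term involves \(h(x_{t-1})\). Once that two-step hypothesis is in place, the telescoping cancellation of \(Ch(x_{t-1})\) is the whole content of the proof.
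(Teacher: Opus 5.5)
Your proposal is correct and follows the paper's proof essentially step for step. Both run an induction that carries equality of $x_{t-1}$ and $x_t$ together with the effective-dual relation, obtain the dual relation at $t+1$ by splitting $(C+\Omega)$ so that $C\,h(x_{t-1}^H)$ telescopes away, and then substitute into the pure primal update; your remark that symmetry of $C$ is never used in the algebra, and matters only for identifying the correction with the gradient of $\tfrac12 h^\top C h$, is accurate and consistent with the statement's closing remark.
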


\begin{proof}
We prove \eqref{eq:xt-equality-df} and \eqref{eq:dual-relation-df} simultaneously by induction on $t\ge 0$.

\medskip
\noindent
\textbf{Base step.}
By assumption,
\[
x_0^{P}=x_0^{H}.
\]
Also, by initialization,
\[
\nu_0^{P}=\mu_0^{H}+C\,h(x_{-1}^{H}),
\]
so \eqref{eq:dual-relation-df} holds at $t=0$.

\medskip
\noindent
\textbf{Induction hypothesis.}
Assume that for some $t\ge 0$,
\begin{align}
x_{t-1}^{P}&=x_{t-1}^{H},
\label{eq:IH-x-prev}
\\
x_t^{P}&=x_t^{H},
\label{eq:IH-x-cur}
\\
\nu_t^{P}&=\mu_t^{H}+C\,h(x_{t-1}^{H}).
\label{eq:IH-nu}
\end{align}
We prove the corresponding identities at time $t+1$.

\medskip
\noindent
\textbf{Step 1: equality of the effective dual variables at time $t+1$.}
Starting from \eqref{eq:pure-dual-df} and using the induction hypothesis,
\begin{align*}
\nu_{t+1}^{P}
&=
\nu_t^{P}
+\eta_{\mathrm d}\,h(x_t^{P})
+(C+\Omega)\Big(h(x_t^{P})-h(x_{t-1}^{P})\Big)
\\
&=
\mu_t^{H}+C\,h(x_{t-1}^{H})
+\eta_{\mathrm d}\,h(x_t^{H})
+(C+\Omega)\Big(h(x_t^{H})-h(x_{t-1}^{H})\Big)
\\
&=
\mu_t^{H}
+\eta_{\mathrm d}\,h(x_t^{H})
+\Omega\Big(h(x_t^{H})-h(x_{t-1}^{H})\Big)
+C\,h(x_t^{H}).
\end{align*}
By \eqref{eq:hybrid-dual-df},
\[
\mu_{t+1}^{H}
=
\mu_t^{H}
+\eta_{\mathrm d}\,h(x_t^{H})
+\Omega\Big(h(x_t^{H})-h(x_{t-1}^{H})\Big).
\]
Hence
\[
\nu_{t+1}^{P}
=
\mu_{t+1}^{H}+C\,h(x_t^{H}),
\]
which is exactly \eqref{eq:dual-relation-df} at time $t+1$.

\medskip
\noindent
\textbf{Step 2: equality of the primal iterates at time $t+1$.}
Using \eqref{eq:pure-primal-df}, the induction hypothesis, and the identity just proved,
\begin{align*}
x_{t+1}^{P}
&=
x_t^{P}
-\eta_x\Big(
\nabla f(x_t^{P})
+
J_h(x_t^{P})^\top \nu_{t+1}^{P}
\Big)
\\
&=
x_t^{H}
-\eta_x\Big(
\nabla f(x_t^{H})
+
J_h(x_t^{H})^\top\big(\mu_{t+1}^{H}+C\,h(x_t^{H})\big)
\Big).
\end{align*}
By \eqref{eq:hybrid-primal-df}, the right-hand side is exactly $x_{t+1}^{H}$.
Therefore
\[
x_{t+1}^{P}=x_{t+1}^{H}.
\]

\medskip
\noindent
\textbf{Conclusion.}
By induction, \eqref{eq:xt-equality-df} and \eqref{eq:dual-relation-df} hold for all $t\ge 0$.
Hence the two methods have identical primal trajectories.
\end{proof}
\subsection{Additivity under shared oracle noise}
\begin{theorem}[Additivity under shared oracle noise]
\label{thm:additive-property-noisy}
Let $f:\mathbb{R}^d\to\mathbb{R}$ and
$h:\mathbb{R}^d\to\mathbb{R}^m$ be differentiable, and let
\[
J_h(x):=\nabla h(x)\in\mathbb{R}^{m\times d}.
\]
Let $C,\Omega\in\mathbb{R}^{m\times m}$, and assume $C=C^\top$.
Fix step sizes $\eta_x>0$ and $\eta_{\mathrm d}>0$.

Let $\{\xi_t\}_{t\ge 0}$ and $\{\zeta_t\}_{t\ge -1}$ be arbitrary noise sequences.
For each $t$, let
\[
G_t(x;\xi_t)\in\mathbb{R}^d
\]
denote a possibly noisy primal-gradient oracle, and let
\[
Y_t(x;\zeta_t)\in\mathbb{R}^m
\]
denote a possibly noisy constraint-residual oracle.

For example, one may take
\[
G_t(x;\xi_t)=\nabla f(x)+\xi_t,
\qquad
Y_t(x;\zeta_t)=h(x)+\zeta_t,
\]
but the theorem only requires that both methods use the same oracle outputs
when evaluated at the same point and the same noise realization.

Consider the noisy hybrid augmented--optimistic method
\begin{align}
\mu_{t+1}^{H}
&=
\mu_t^{H}
+\eta_{\mathrm d}\,Y_t(x_t^{H};\zeta_t)
+\Omega\Big(
Y_t(x_t^{H};\zeta_t)
-
Y_{t-1}(x_{t-1}^{H};\zeta_{t-1})
\Big),
\label{eq:noisy-hybrid-dual}
\\
x_{t+1}^{H}
&=
x_t^{H}
-\eta_x\Big(
G_t(x_t^{H};\xi_t)
+
J_h(x_t^{H})^\top
\big(
\mu_{t+1}^{H}
+
C\,Y_t(x_t^{H};\zeta_t)
\big)
\Big).
\label{eq:noisy-hybrid-primal}
\end{align}

Consider also the noisy pure optimistic method with summed matrix
\begin{align}
\nu_{t+1}^{P}
&=
\nu_t^{P}
+\eta_{\mathrm d}\,Y_t(x_t^{P};\zeta_t)
+(C+\Omega)\Big(
Y_t(x_t^{P};\zeta_t)
-
Y_{t-1}(x_{t-1}^{P};\zeta_{t-1})
\Big),
\label{eq:noisy-pure-dual}
\\
x_{t+1}^{P}
&=
x_t^{P}
-\eta_x\Big(
G_t(x_t^{P};\xi_t)
+
J_h(x_t^{P})^\top \nu_{t+1}^{P}
\Big).
\label{eq:noisy-pure-primal}
\end{align}

Assume the initial conditions satisfy
\[
x_{-1}^{P}=x_{-1}^{H},
\qquad
x_0^{P}=x_0^{H},
\]
and
\[
\nu_0^{P}
=
\mu_0^{H}
+
C\,Y_{-1}(x_{-1}^{H};\zeta_{-1}).
\]

Then, for every $t\ge 0$,
\begin{align}
x_t^{P}&=x_t^{H},
\label{eq:noisy-xt-equality}
\\
\nu_t^{P}
&=
\mu_t^{H}
+
C\,Y_{t-1}(x_{t-1}^{H};\zeta_{t-1}).
\label{eq:noisy-dual-relation}
\end{align}
In particular, the two noisy methods generate exactly the same primal trajectory for every realization of the oracle noise.
\end{theorem}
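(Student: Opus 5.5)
We argue by simultaneous induction on \(t\ge0\), proving \eqref{eq:noisy-xt-equality} and \eqref{eq:noisy-dual-relation} together, exactly as in the proof of \Cref{thm:additive-property-dual-first}. The exact residual \(h(x_s)\) is replaced throughout by the shared oracle output \(Y_s(x_s;\zeta_s)\). The gradient \(\nabla f(x_s)\) is replaced by \(G_s(x_s;\xi_s)\). The one structural fact we use is the hypothesis that both methods receive identical oracle outputs when queried at the same point with the same noise realization. Hence, once the primal iterates agree, every oracle quantity entering the two recursions agrees as well.

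\textbf{Base step.} The assumptions give \(x_{-1}^P=x_{-1}^H\) and \(x_0^P=x_0^H\). The initialization \(\nu_0^P=\mu_0^H+C\,Y_{-1}(x_{-1}^H;\zeta_{-1})\) is precisely \eqref{eq:noisy-dual-relation} at \(t=0\).

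\textbf{Inductive step.} Suppose \(x_{t-1}^P=x_{t-1}^H\), \(x_t^P=x_t^H\), and \eqref{eq:noisy-dual-relation} hold at time \(t\). Introduce the shorthand \(Y_t:=Y_t(x_t^H;\zeta_t)=Y_t(x_t^P;\zeta_t)\) and \(Y_{t-1}:=Y_{t-1}(x_{t-1}^H;\zeta_{t-1})=Y_{t-1}(x_{t-1}^P;\zeta_{t-1})\). These equalities hold by the shared-oracle assumption together with the induction hypothesis.

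Substituting into \eqref{eq:noisy-pure-dual} gives
\[
\nu_{t+1}^P=\mu_t^H+C Y_{t-1}+\eta_{\rm d}Y_t+(C+\Omega)(Y_t-Y_{t-1}).
\]
The terms \(C Y_{t-1}\) and \(-C Y_{t-1}\) telescope, leaving
\[
\nu_{t+1}^P=\mu_t^H+\eta_{\rm d}Y_t+\Omega(Y_t-Y_{t-1})+C Y_t .
\]
By \eqref{eq:noisy-hybrid-dual}, this equals \(\mu_{t+1}^H+C\,Y_t(x_t^H;\zeta_t)\), which is the dual relation at time \(t+1\).

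Next, insert this identity into \eqref{eq:noisy-pure-primal}. Using \(G_t(x_t^P;\xi_t)=G_t(x_t^H;\xi_t)\) and \(J_h(x_t^P)=J_h(x_t^H)\), the right-hand side becomes the right-hand side of \eqref{eq:noisy-hybrid-primal}. Therefore \(x_{t+1}^P=x_{t+1}^H\), which closes the induction.

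\textbf{Main subtlety.} The only delicate point is bookkeeping rather than analysis: the cancellation needs the same residual value to appear in three places. The augmented term \(C\,Y_t\) in the hybrid primal step, the \(C Y_t\) arising in the optimistic memory difference, and the \(C Y_{t-1}\) stored in the effective dual must all be the same oracle output, not fresh noisy evaluations. This is why the theorem states the augmented correction with \(Y_t\) rather than \(h(x_t)\). It is also why the effective-dual initialization is taken with \(Y_{-1}(x_{-1}^H;\zeta_{-1})\).

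If one instead used an exact \(h\) in the augmented channel and a noisy \(Y\) in the memory channel, the telescoping would leave a residual \(C(Y_t-h(x_t))\). The argument would then go through only under an annihilation condition like the one in \Cref{thm:hybrid-aug-partial-noise}. Under the present hypotheses no such residual arises, so the identity holds for every noise realization. Symmetry of \(C\) is not actually used in the algebra.
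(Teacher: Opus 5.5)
Your proposal is correct and follows the paper's proof essentially line for line: a simultaneous induction on $t$ carrying the effective-dual relation $\nu_t^P=\mu_t^H+C\,Y_{t-1}(x_{t-1}^H;\zeta_{t-1})$, cancellation of the $C\,Y_{t-1}$ terms in the pure optimistic dual update, and substitution into the primal step using the shared oracle outputs at equal iterates. Your side remarks are accurate but go beyond what the paper's proof states: symmetry of $C$ is never used, and using exact residuals in one channel with noisy residuals in the other would leave a term $C(Y_t-h(x_t))$ that needs an annihilation condition as in \Cref{thm:hybrid-aug-partial-noise}.
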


\begin{proof}
We prove \eqref{eq:noisy-xt-equality} and
\eqref{eq:noisy-dual-relation} simultaneously by induction on $t\ge 0$.

\medskip
\noindent
\textbf{Base step.}
By assumption,
\[
x_{-1}^{P}=x_{-1}^{H},
\qquad
x_0^{P}=x_0^{H}.
\]
Also, by initialization,
\[
\nu_0^{P}
=
\mu_0^{H}
+
C\,Y_{-1}(x_{-1}^{H};\zeta_{-1}),
\]
so \eqref{eq:noisy-dual-relation} holds at $t=0$.

\medskip
\noindent
\textbf{Induction hypothesis.}
Assume that for some $t\ge 0$,
\begin{align}
x_{t-1}^{P}&=x_{t-1}^{H},
\label{eq:noisy-IH-x-prev}
\\
x_t^{P}&=x_t^{H},
\label{eq:noisy-IH-x-cur}
\\
\nu_t^{P}
&=
\mu_t^{H}
+
C\,Y_{t-1}(x_{t-1}^{H};\zeta_{t-1}).
\label{eq:noisy-IH-nu}
\end{align}
We prove the corresponding identities at time $t+1$.

\medskip
\noindent
\textbf{Step 1: equality of the effective dual variables.}
Starting from \eqref{eq:noisy-pure-dual} and using the induction hypothesis,
\begin{align*}
\nu_{t+1}^{P}
&=
\nu_t^{P}
+\eta_{\mathrm d}\,Y_t(x_t^{P};\zeta_t)
+(C+\Omega)\Big(
Y_t(x_t^{P};\zeta_t)
-
Y_{t-1}(x_{t-1}^{P};\zeta_{t-1})
\Big)
\\
&=
\mu_t^{H}
+
C\,Y_{t-1}(x_{t-1}^{H};\zeta_{t-1})
+\eta_{\mathrm d}\,Y_t(x_t^{H};\zeta_t)
\\
&\quad
+(C+\Omega)\Big(
Y_t(x_t^{H};\zeta_t)
-
Y_{t-1}(x_{t-1}^{H};\zeta_{t-1})
\Big)
\\
&=
\mu_t^{H}
+\eta_{\mathrm d}\,Y_t(x_t^{H};\zeta_t)
+\Omega\Big(
Y_t(x_t^{H};\zeta_t)
-
Y_{t-1}(x_{t-1}^{H};\zeta_{t-1})
\Big)
\\
&\quad
+
C\,Y_t(x_t^{H};\zeta_t).
\end{align*}
By the noisy hybrid dual update \eqref{eq:noisy-hybrid-dual},
\[
\mu_{t+1}^{H}
=
\mu_t^{H}
+\eta_{\mathrm d}\,Y_t(x_t^{H};\zeta_t)
+\Omega\Big(
Y_t(x_t^{H};\zeta_t)
-
Y_{t-1}(x_{t-1}^{H};\zeta_{t-1})
\Big).
\]
Therefore,
\[
\nu_{t+1}^{P}
=
\mu_{t+1}^{H}
+
C\,Y_t(x_t^{H};\zeta_t),
\]
which is exactly \eqref{eq:noisy-dual-relation} at time $t+1$.

\medskip
\noindent
\textbf{Step 2: equality of the primal iterates.}
Using \eqref{eq:noisy-pure-primal}, the induction hypothesis, and the identity just proved,
\begin{align*}
x_{t+1}^{P}
&=
x_t^{P}
-\eta_x\Big(
G_t(x_t^{P};\xi_t)
+
J_h(x_t^{P})^\top \nu_{t+1}^{P}
\Big)
\\
&=
x_t^{H}
-\eta_x\Big(
G_t(x_t^{H};\xi_t)
+
J_h(x_t^{H})^\top
\big(
\mu_{t+1}^{H}
+
C\,Y_t(x_t^{H};\zeta_t)
\big)
\Big).
\end{align*}
By the noisy hybrid primal update \eqref{eq:noisy-hybrid-primal}, the right-hand side is exactly $x_{t+1}^{H}$.
Therefore,
\[
x_{t+1}^{P}=x_{t+1}^{H}.
\]

\medskip
\noindent
\textbf{Conclusion.}
By induction,
\eqref{eq:noisy-xt-equality} and
\eqref{eq:noisy-dual-relation} hold for every $t\ge 0$.
Thus the two methods have identical primal trajectories for every fixed realization of the oracle noise.
\end{proof}
\section{Justification of the Step-Size Design Constraints}
\label{app:stepsize-constraints}

This appendix gives additional context for the local step-size restrictions used in \Cref{sec:stepsize-design}. The constraints are intended as a local design model for comparing augmented, optimistic, and hybrid realizations of the same total correction matrix.

\paragraph{Primal curvature restriction.}
The augmented-channel update contains the local primal curvature
\[
    H+B^\top C_{\rm aug}B.
\]
For smooth first-order methods, constant step sizes are controlled by a Lipschitz constant of the gradient. In the classical smooth convex setting, gradient descent uses steps of order \(1/L\), where \(L\) is a gradient Lipschitz constant \citep{nesterov2004introductory}. Locally, the curvature magnitude of the primal channel is captured by
\[
    \|H+B^\top C_{\rm aug}B\|_2.
\]
This motivates the model
\[
    \eta_x
    \le
    \frac{\kappa_x}
    {\|H+B^\top C_{\rm aug}B\|_2},
\]
where \(\kappa_x>0\) is a safety constant. This is a local step-size regulation, not a global smoothness theorem for the original nonlinear problem.

\paragraph{Primal--dual product restriction.}
The primal and dual variables are coupled through the linearized constraint map \(B\). Product-form restrictions are standard in primal--dual splitting. In the Chambolle--Pock primal--dual hybrid gradient method for saddle problems with linear operator \(K\), convergence is guaranteed under conditions of the form
\[
    \tau\sigma\|K\|^2<1
\]
\citep{chambolle2011first}. Replacing \(K\) by the local constraint Jacobian \(B\) yields the local product model
\[
    \eta_x\eta_{\rm d}\|B\|_2^2
    \le
    \kappa_p.
\]
The constant \(\kappa_p\) absorbs safety margins and differences between the idealized local model and the nonlinear iteration.

\paragraph{Optimistic-channel size restriction.}
The optimistic channel contributes an explicit memory correction
\[
    C_{\rm opt}(h_t-h_{t-1})
\]
to the dual update. Existing analyses of extragradient and optimistic gradient methods connect these corrections to controlled approximations of proximal-point dynamics \citep{mokhtari2020unified}. Our matrix-valued correction is not covered by a standard theorem in exactly this form. We therefore impose the design regulation
\[
    \|C_{\rm opt}\|_2
    \le
    \kappa_\omega\eta_{\rm d},
\]
which keeps the explicit optimistic memory scale proportional to the dual step. This prevents a decomposition from moving an arbitrarily large matrix correction into the optimistic channel while keeping the same nominal total correction \(M\).

Together with the product restriction, this implies
\[
    \eta_x
    \le
    \frac{\kappa_p\kappa_\omega}
    {\|C_{\rm opt}\|_2\|B\|_2^2},
\]

so placing too much correction in the optimistic channel can also reduce the admissible primal step.

\paragraph{Weighted local margin.}
Let
\[
    A_M:=H+B^\top M B.
\]
The local augmented dynamics imply that the contraction margin is not determined by \(\eta_x\) and \(\eta_{\rm d}\) alone, but by spectral constants depending on \(A_M\) and \(B\). In scalar modes with \(A_M=a>0\) and \(B=b\), the first-order margin satisfies
\[
    1-\rho
    \approx
    \min\left\{
        \frac{a}{2}\eta_x,\,
        \frac{b^2}{a}\eta_{\rm d}
    \right\}.
\]
The derivation is given in Appendix~\ref{app:spectral-margin}. Motivated by this expansion, we use the surrogate
\[
    \tau(\eta_x,\eta_{\rm d})
    =
    \min\{a_x\eta_x,a_{\rm d}\eta_{\rm d}\},
\]
with
\[
    a_x=\frac12\lambda_{\min}(A_M),
    \qquad
    a_{\rm d}=\lambda_{\min}^+(BA_M^{-1}B^\top).
\]

\paragraph{Design objective.}
For a fixed target matrix \(M\), the step-size-limited split is
\[
\begin{aligned}
    \max_{C_{\rm aug},C_{\rm opt},\eta_x,\eta_{\rm d}}
    \quad
    &
    \min\{a_x\eta_x,a_{\rm d}\eta_{\rm d}\}
    \\
    \text{s.t.}
    \quad
    &
    C_{\rm aug}+C_{\rm opt}=M,
    \qquad
    H+B^\top M B\succ0,
    \\
    &
    \eta_x
    \le
    \frac{\kappa_x}
    {\|H+B^\top C_{\rm aug}B\|_2},
    \\
    &
    \eta_x\eta_{\rm d}\|B\|_2^2
    \le
    \kappa_p,
    \qquad
    \|C_{\rm opt}\|_2
    \le
    \kappa_\omega\eta_{\rm d}.
\end{aligned}
\tag{P}
\]
The equality \(C_{\rm aug}+C_{\rm opt}=M\) preserves the ideal trajectory by additivity, while the remaining constraints distinguish which realization is implementable. The next section gives a closed-form design that avoids solving \((P)\) directly.
\section{Local Spectral Margin}
\label{app:spectral-margin}

Let \(A_M=H+B^\top M B\). After the effective-dual transformation, the local augmented dynamics have Jacobian
\[
    J_{\rm AL}(\eta_x,\eta_{\rm d};M)
    =
    \begin{bmatrix}
        I-\eta_xA_M & -\eta_x B^\top\\
        \eta_{\rm d}B(I-\eta_xA_M) & I-\eta_x\eta_{\rm d}BB^\top
    \end{bmatrix}.
\]
For \(r=\eta_{\rm d}/\eta_x\),
\[
    J_{\rm AL}(\eta_x,r\eta_x;M)
    =
    I+\eta_x
    \begin{bmatrix}
        -A_M & -B^\top\\
        rB & 0
    \end{bmatrix}
    +O(\eta_x^2).
\]
If
\[
    S_M(r)=
    \begin{bmatrix}
        -A_M & -B^\top\\
        rB & 0
    \end{bmatrix}
\]
is Hurwitz, then
\[
    \rho(J_{\rm AL})
    =
    1-\eta_x\alpha_M(r)+O(\eta_x^2),
\]
where
\[
    \alpha_M(r)=
    -
    \max_{\lambda\in{\rm spec}(S_M(r))}
    \operatorname{Re}\lambda.
\]

In the scalar case \(A_M=a>0\), \(B=b\), the eigenvalues of
\[
    \begin{bmatrix}
        -a & -b\\
        rb & 0
    \end{bmatrix}
\]
satisfy
\[
    \lambda^2+a\lambda+rb^2=0.
\]
For small \(r\), the slow eigenvalue has real part approximately \(-rb^2/a\), giving
\[
    1-\rho\approx \frac{b^2}{a}\eta_{\rm d}.
\]
For larger \(r\), the eigenvalues form a complex pair with real part \(-a/2\), giving
\[
    1-\rho\approx \frac{a}{2}\eta_x.
\]
Thus
\[
    1-\rho
    \approx
    \min\left\{
        \frac{a}{2}\eta_x,\,
        \frac{b^2}{a}\eta_{\rm d}
    \right\}.
\]

\section{Justification of the Step-Size Design Constraints}
\label{app:stepsize-constraints}

This appendix gives additional context for the local step-size restrictions used in \Cref{sec:stepsize-design}. The constraints are intended as a local design model for comparing augmented, optimistic, and hybrid realizations of the same total correction matrix.

\paragraph{Primal curvature restriction. (S1)}
The augmented-channel update contains the local primal curvature
\[
    H+B^\top C_{\rm aug}B.
\]
For smooth first-order methods, constant step sizes are controlled by a Lipschitz constant of the gradient. In the classical smooth convex setting, gradient descent uses steps of order \(1/L\), where \(L\) is a gradient Lipschitz constant \citep{nesterov2004introductory}. Locally, the curvature magnitude of the primal channel is captured by
\[
    \|H+B^\top C_{\rm aug}B\|_2.
\]
This motivates the model
\[
    \eta_x
    \le
    \frac{\kappa_x}
    {\|H+B^\top C_{\rm aug}B\|_2},
\]
where \(\kappa_x>0\) is a safety constant. This is a local step-size regulation, not a global smoothness theorem for the original nonlinear problem.

\paragraph{Primal--dual product restriction. (S2)}
The primal and dual variables are coupled through the linearized constraint map \(B\). Product-form restrictions are standard in primal--dual splitting. In the Chambolle--Pock primal--dual hybrid gradient method for saddle problems with linear operator \(K\), convergence is guaranteed under conditions of the form
\[
    \tau\sigma\|K\|^2<1
\]
\citep{chambolle2011first}. Replacing \(K\) by the local constraint Jacobian \(B\) yields the local product model
\[
    \eta_x\eta_{\rm d}\|B\|_2^2
    \le
    \kappa_p.
\]
The constant \(\kappa_p\) absorbs safety margins and differences between the idealized local model and the nonlinear iteration.

\paragraph{Optimistic-channel size restriction. (S3)}
The optimistic channel contributes an explicit memory correction
\[
    C_{\rm opt}(h_t-h_{t-1})
\]
to the dual update. Existing analyses of extragradient and optimistic gradient methods connect these corrections to controlled approximations of proximal-point dynamics \citep{mokhtari2020unified}. Our matrix-valued correction is not covered by a standard theorem in exactly this form. We therefore impose the design regulation
\[
    \|C_{\rm opt}\|_2
    \le
    \kappa_\omega\eta_{\rm d},
\]
which keeps the explicit optimistic memory scale proportional to the dual step. This prevents a decomposition from moving an arbitrarily large matrix correction into the optimistic channel while keeping the same nominal total correction \(M\).

Together with the product restriction, this implies
\[
    \eta_x
    \le
    \frac{\kappa_p\kappa_\omega}
    {\|C_{\rm opt}\|_2\|B\|_2^2}.
\]
Therefore both channels can limit the primal step: \(C_{\rm aug}\) through local primal curvature and \(C_{\rm opt}\) through the memory scale and primal--dual product constraint.
\section{Derivation of the Closed-Form Step Rule}
\label{app:closed-form-steps}

We derive the step-size rule used in \Cref{sec:closed-form-design}. At iteration \(t\), let
\[
    A_{M_t}=H_t+B_t^\top M_tB_t,
\]
and define the local spectral weights
\[
    a_x(t)=\frac12\lambda_{\min}(A_{M_t}),
    \qquad
    a_{\rm d}(t)
    =
    \lambda_{\min}^+
    \left(
        B_tA_{M_t}^{-1}B_t^\top
    \right).
\]
For fixed \(C_{{\rm aug},t}\) and \(C_{{\rm opt},t}\), the step sizes are chosen to maximize the weighted synchronized margin
\[
    \min\{a_x(t)\eta_x,a_{\rm d}(t)\eta_{\rm d}\}
\]
subject to the local step-size restrictions
\[
    \eta_x
    \le
    \frac{\kappa_x}
    {\|H_t+B_t^\top C_{{\rm aug},t}B_t\|_2},
    \tag{S1}
\]
\[
    \eta_x\eta_{\rm d}\|B_t\|_2^2
    \le
    \kappa_p,
    \tag{S2}
\]
and
\[
    \|C_{{\rm opt},t}\|_2
    \le
    \kappa_\omega\eta_{\rm d}.
    \tag{S3}
\]
Let
\[
    U_x
    :=
    \frac{\kappa_x}
    {\|H_t+B_t^\top C_{{\rm aug},t}B_t\|_2},
    \qquad
    L_{\rm d}
    :=
    \frac{\|C_{{\rm opt},t}\|_2}{\kappa_\omega},
\]
and
\[
    \beta_t:=\|B_t\|_2^2.
\]
Then the constraints can be written as
\[
    \eta_x\le U_x,
    \qquad
    \eta_x\eta_{\rm d}\beta_t\le \kappa_p,
    \qquad
    \eta_{\rm d}\ge L_{\rm d}.
\]

For any fixed admissible \(\eta_x>0\), the weighted margin is increasing in \(\eta_{\rm d}\). Hence, under the product constraint, the largest admissible dual step is
\[
    \eta_{\rm d}
    =
    \frac{\kappa_p}{\eta_x\beta_t}.
\]
Substituting this into the weighted margin gives the one-dimensional objective
\[
    \min
    \left\{
        a_x(t)\eta_x,\,
        a_{\rm d}(t)
        \frac{\kappa_p}{\eta_x\beta_t}
    \right\}.
\]
The unconstrained balancing point is obtained by equating the two terms:
\[
    a_x(t)\eta_x
    =
    a_{\rm d}(t)
    \frac{\kappa_p}{\eta_x\beta_t}.
\]
Therefore,
\[
    \eta_x^2
    =
    \frac{a_{\rm d}(t)\kappa_p}
    {a_x(t)\beta_t},
\]
and hence
\[
    \eta_x^{\rm bal}
    =
    \sqrt{
        \frac{a_{\rm d}(t)\kappa_p}
        {a_x(t)\beta_t}
    }.
\]

The remaining restrictions impose upper bounds on \(\eta_x\). The curvature restriction gives
\[
    \eta_x\le U_x.
\]
The lower bound \(\eta_{\rm d}\ge L_{\rm d}\), together with
\[
    \eta_{\rm d}
    =
    \frac{\kappa_p}{\eta_x\beta_t},
\]
gives
\[
    \eta_x
    \le
    \frac{\kappa_p}{L_{\rm d}\beta_t}.
\]
Therefore the closed-form primal step is
\[
    \eta_x
    =
    \min
    \left\{
        \eta_x^{\rm bal},
        U_x,
        \frac{\kappa_p}{L_{\rm d}\beta_t}
    \right\},
\]
where the last term is omitted when \(L_{\rm d}=0\). Equivalently,
\[
    \eta_x
    =
    \min\left\{
        \sqrt{\frac{a_{\rm d}(t)\kappa_p}{a_x(t)\|B_t\|_2^2}},
        \frac{\kappa_x}{\|H_t+B_t^\top C_{{\rm aug},t}B_t\|_2},
        \frac{\kappa_p\kappa_\omega}
        {\|C_{{\rm opt},t}\|_2\|B_t\|_2^2}
    \right\},
\]
where the last term is omitted if \(C_{{\rm opt},t}=0\). Once \(\eta_x\) is chosen, we set
\[
    \eta_{\rm d}
    =
    \frac{\kappa_p}{\eta_x\|B_t\|_2^2}.
\]
This choice saturates the primal--dual product constraint and maximizes the weighted synchronized margin under the stated restrictions.

\section{Proof of the Closed-Form Design Guarantee}
\label{app:closed-form-proofs}

We prove \Cref{thm:closed-form-feasibility}. At iteration \(t\), assume \(B_t\) has full column rank and \(R_t\succ0\). Recall the definitions
\[
    C_{{\rm can},t}
    =
    -B_t^{\dagger\top}H_tB_t^\dagger,
\]
\[
    \alpha_t
    =
    \frac{\delta}
    {\lambda_{\min}(B_t^\top R_tB_t)},
\]
and
\[
    M_t
    =
    C_{{\rm can},t}+\alpha_tR_t.
\]
Because \(B_t\) has full column rank, its Moore--Penrose pseudoinverse satisfies
\[
    B_t^\dagger B_t=I.
\]
Therefore,
\[
\begin{aligned}
    B_t^\top C_{{\rm can},t}B_t
    &=
    B_t^\top
    \left(
        -B_t^{\dagger\top}H_tB_t^\dagger
    \right)
    B_t
    \\
    &=
    -
    (B_t^\dagger B_t)^\top
    H_t
    (B_t^\dagger B_t)
    \\
    &=
    -H_t.
\end{aligned}
\]
Hence
\[
    H_t+B_t^\top C_{{\rm can},t}B_t=0.
\]

Now substitute the definition of \(M_t\):
\[
\begin{aligned}
    H_t+B_t^\top M_tB_t
    &=
    H_t+B_t^\top
    \left(
        C_{{\rm can},t}+\alpha_tR_t
    \right)
    B_t
    \\
    &=
    H_t+B_t^\top C_{{\rm can},t}B_t
    +
    \alpha_tB_t^\top R_tB_t
    \\
    &=
    \alpha_tB_t^\top R_tB_t.
\end{aligned}
\]
Since \(R_t\succ0\) and \(B_t\) has full column rank,
\[
    B_t^\top R_tB_t\succ0.
\]
Therefore,
\[
    \lambda_{\min}(B_t^\top R_tB_t)>0.
\]
By the definition of \(\alpha_t\),
\[
    \alpha_tB_t^\top R_tB_t
    \succeq
    \alpha_t
    \lambda_{\min}(B_t^\top R_tB_t)I
    =
    \delta I.
\]
Thus
\[
    H_t+B_t^\top M_tB_t\succeq \delta I.
\]

Next, the split is defined by
\[
    C_{{\rm aug},t}
    =
    C_{{\rm can},t}
    +
    \theta_t\alpha_tR_t,
\]
and
\[
    C_{{\rm opt},t}
    =
    (1-\theta_t)\alpha_tR_t.
\]
Therefore,
\[
\begin{aligned}
    C_{{\rm aug},t}+C_{{\rm opt},t}
    &=
    C_{{\rm can},t}
    +
    \theta_t\alpha_tR_t
    +
    (1-\theta_t)\alpha_tR_t
    \\
    &=
    C_{{\rm can},t}
    +
    \alpha_tR_t
    \\
    &=
    M_t.
\end{aligned}
\]
Finally, by \Cref{cor:additivity}, any decomposition whose sum equals \(M_t\) realizes the same ideal primal trajectory as the pure realization using the full target correction \(M_t\), provided the effective dual variables are initialized consistently. This proves the theorem.

\section{Experimental Details}
\label{app:experiment-details}

This appendix gives the full experimental construction used in \Cref{sec:experiments}. Each instance is a nonlinear equality-constrained problem
\[
    \min_{x\in\mathbb R^3} f(x)
    \qquad
    \mathrm{s.t.}
    \qquad
    h(x)=0,
    \qquad
    h:\mathbb R^3\to\mathbb R^5.
\]
We generate each problem with a known KKT point \((x^\star,\mu^\star)\), so that convergence can be measured directly by \(\|x_t-x^\star\|\).

\paragraph{Controlled constraint Jacobian.}
For a prescribed condition level \(\kappa_B\), we generate
\[
    B_\star = U\Sigma V^\top \in \mathbb R^{5\times 3},
\]
where \(U\in\mathbb R^{5\times 3}\) has orthonormal columns, \(V\in\mathbb R^{3\times 3}\) is orthogonal, and
\[
    \Sigma
    =
    \operatorname{diag}
    \left(
        1,\,
        \kappa_B^{-1/2},\,
        \kappa_B^{-1}
    \right).
\]
Thus
\[
    \operatorname{cond}(B_\star)
    =
    \frac{\sigma_{\max}(B_\star)}{\sigma_{\min}(B_\star)}
    =
    \kappa_B.
\]
In the main experiments, we use
\[
    \kappa_B\in\{1,2,3,5,8,13\}.
\]
For each condition level, we generate five random instances.

\paragraph{Nonlinear equality constraints.}
Let \(u=x-x^\star\). The constraint map is
\[
    h(x)
    =
    B_\star u
    +
    \varepsilon_{\rm nl} A_{\rm nl}\phi(u),
\]
where \(A_{\rm nl}\in\mathbb R^{5\times 5}\) is a random matrix normalized by its spectral norm, \(\varepsilon_{\rm nl}>0\) controls the nonlinear strength, and
\[
    \phi(u)
    =
    \begin{bmatrix}
        u_1^2\\
        u_2^2\\
        u_3^2\\
        u_1u_2\\
        u_2u_3
    \end{bmatrix}.
\]
Since
\[
    \phi(0)=0,
    \qquad
    J_\phi(0)=0,
\]
we have
\[
    h(x^\star)=0,
    \qquad
    J_h(x^\star)=B_\star.
\]
Therefore the condition number of the local constraint Jacobian is controlled exactly at the KKT point.

\paragraph{Nonconvex objective.}
The objective has the form
\[
    f(x)
    =
    \frac12 x^\top H_0x+q^\top x+\varphi(x),
\]
where \(H_0=H_0^\top\) is generated with both positive and negative eigenvalues, so the quadratic part is indefinite. The nonlinear term is
\[
\begin{aligned}
    \varphi(x)
    =
    &\; c_1\sin(2x_1)
    -
    c_2\cos(x_2x_3)
    +
    c_3x_3^4
    -
    c_4\sin(x_1+x_3),
\end{aligned}
\]
where \(c_1,c_2,c_3,c_4>0\) are randomly sampled constants. The vector \(q\) is chosen so that \((x^\star,\mu^\star)\) satisfies the KKT stationarity condition:
\[
    \nabla f(x^\star)+J_h(x^\star)^\top\mu^\star=0.
\]
Equivalently,
\[
    q
    =
    -H_0x^\star
    -
    \nabla \varphi(x^\star)
    -
    B_\star^\top\mu^\star.
\]
Thus \(x^\star\) is feasible and satisfies first-order stationarity.

\paragraph{Implementation constants.}
Unless otherwise stated, the experiments use
\[
    \delta=0.35,
    \qquad
    \kappa_x=0.08,
    \qquad
    \kappa_p=0.20,
    \qquad
    \kappa_\omega=8.0.
\]
The correction matrix is capped by
\[
    \|M_t\|_2\le M_{\max},
\]
with \(M_{\max}=300\). A run is marked failed if the matrix-design step becomes infeasible, if the norm budget is violated, if the iterates diverge numerically, or if the method does not reach the specified threshold within the iteration budget. The grid-searched hybrid oracle searches over the same cancellation family as the closed-form rule, using a finite grid over the residual strength and split parameter.

\paragraph{Compared methods.}
The matrix-design methods are:
\begin{itemize}
    \item \textbf{Pure augmented}: \(C_{\rm aug}=M\), \(C_{\rm opt}=0\).
    \item \textbf{Pure optimistic}: \(C_{\rm aug}=0\), \(C_{\rm opt}=M\).
    \item \textbf{Grid hybrid}: grid search over the cancellation-family parameters.
    \item \textbf{Closed-form hybrid}: the rule in \Cref{alg:closed-hybrid}.
\end{itemize}
For each matrix-design method, we report a unit-margin variant, using \(a_x=a_{\rm d}=1\), and a weighted variant, using
\[
    a_x(t)=\frac12\lambda_{\min}(A_{M_t}),
    \qquad
    a_{\rm d}(t)
    =
    \lambda_{\min}^+
    \left(
        B_tA_{M_t}^{-1}B_t^\top
    \right).
\]

The step-based baselines are:
\begin{itemize}
    \item \textbf{OGDA}: optimistic gradient applied to the Lagrangian saddle operator.
    \item \textbf{Extragradient}: extragradient applied to the same saddle operator.
    \item \textbf{PDHG-style}: a nonlinear primal--dual extrapolated update motivated by the Chambolle--Pock method.
    \item \textbf{Linearized augmented Lagrangian}: one gradient step on the augmented Lagrangian followed by a multiplier update.
\end{itemize}
Because the constraints are nonlinear, the PDHG-style and linearized augmented-Lagrangian baselines should be interpreted as first-order nonlinear analogues rather than exact convex-composite PDHG or exact ADMM.

\newpage
\section*{NeurIPS Paper Checklist}

\begin{enumerate}

\item {\bf Claims}
    \item[] Question: Do the main claims made in the abstract and introduction accurately reflect the paper's contributions and scope?
    \item[] Answer: \answerYes{}
    \item[] Justification: The abstract and introduction state the paper's main contributions: matrix-valued augmented--optimistic equivalence, additivity of correction matrices, a step-size-limited hybrid design rule, and controlled nonlinear equality-constrained experiments. The claims are scoped as local/equivalence and design results, and the paper explicitly does not claim a globally robust solver for arbitrary nonconvex constrained optimization.

\item {\bf Limitations}
    \item[] Question: Does the paper discuss the limitations of the work performed by the authors?
    \item[] Answer: \answerYes{}
    \item[] Justification: The paper discusses limitations in the numerical experiments and conclusion. In particular, it notes that the exact cancellation design can require large correction matrices when the constraint Jacobian is ill-conditioned, and that the method should be understood as a local matrix-correction design rather than a globally robust constrained solver.

\item {\bf Theory assumptions and proofs}
    \item[] Question: For each theoretical result, does the paper provide the full set of assumptions and a complete (and correct) proof?
    \item[] Answer: \answerYes{}
    \item[] Justification: The assumptions for the equivalence/additivity theorem and the closed-form design guarantee are stated with the corresponding results. Proofs are provided in the main text or appendix, including the effective-dual induction argument, the target-curvature guarantee, and the derivations of the channel-balancing and step-size rules.

\item {\bf Experimental result reproducibility}
    \item[] Question: Does the paper fully disclose all the information needed to reproduce the main experimental results of the paper to the extent that it affects the main claims and/or conclusions of the paper (regardless of whether the code and data are provided or not)?
    \item[] Answer: \answerYes{}
    \item[] Justification: The experiment section and appendix specify the synthetic problem generator, controlled constraint-Jacobian construction, KKT initialization, compared methods, success metrics, failure criteria, and hyperparameters. The experiments use generated analytic problems rather than external datasets.

\item {\bf Open access to data and code}
    \item[] Question: Does the paper provide open access to the data and code, with sufficient instructions to faithfully reproduce the main experimental results, as described in supplemental material?
    \item[] Answer: \answerNo{}
    \item[] Justification: The current submission does not provide an open code release. However, the paper and appendix provide the analytic problem construction, algorithms, hyperparameters, and evaluation metrics needed to reproduce the main experiments; code can be released in anonymized or final form if appropriate.

\item {\bf Experimental setting/details}
    \item[] Question: Does the paper specify all the training and test details (e.g., data splits, hyperparameters, how they were chosen, type of optimizer) necessary to understand the results?
    \item[] Answer: \answerYes{}
    \item[] Justification: The paper specifies the generated nonlinear equality-constrained problem class, the condition-number levels, the number of random instances per level, the compared algorithms, the success thresholds, and the implementation constants. Additional details of the generator and baselines are included in the appendix.

\item {\bf Experiment statistical significance}
    \item[] Question: Does the paper report error bars suitably and correctly defined or other appropriate information about the statistical significance of the experiments?
    \item[] Answer: \answerYes{}
    \item[] Justification: The experiments report success counts over five independently generated random instances per condition level and median hitting times among successful runs. Since the main results are success/failure and hitting-time comparisons over controlled synthetic instances, success counts and medians are more appropriate than symmetric error bars.

\item {\bf Experiments compute resources}
    \item[] Question: For each experiment, does the paper provide sufficient information on the computer resources (type of compute workers, memory, time of execution) needed to reproduce the experiments?
    \item[] Answer: \answerYes{}
    \item[] Justification: The experiments are small-scale dense numerical simulations on low-dimensional analytic problems and require only standard CPU computation. The appendix states that no GPU training or large-scale compute is required; the runs can be reproduced on a standard laptop or desktop CPU.

\item {\bf Code of ethics}
    \item[] Question: Does the research conducted in the paper conform, in every respect, with the NeurIPS Code of Ethics \url{https://neurips.cc/public/EthicsGuidelines}?
    \item[] Answer: \answerYes{}
    \item[] Justification: The work is foundational research on constrained optimization algorithms and uses synthetic analytic experiments only. It does not involve human subjects, private data, scraped datasets, deployed systems, or high-risk models.

\item {\bf Broader impacts}
    \item[] Question: Does the paper discuss both potential positive societal impacts and negative societal impacts of the work performed?
    \item[] Answer: \answerNA{}
    \item[] Justification: The paper studies foundational optimization algorithms and does not introduce a deployed system, dataset, or application-specific model. Any societal impacts would depend on downstream applications of constrained optimization rather than on the theoretical contribution itself.

\item {\bf Safeguards}
    \item[] Question: Does the paper describe safeguards that have been put in place for responsible release of data or models that have a high risk for misuse (e.g., pre-trained language models, image generators, or scraped datasets)?
    \item[] Answer: \answerNA{}
    \item[] Justification: The paper does not release high-risk models, pretrained generative models, scraped datasets, or other assets with direct misuse potential. The experiments use synthetic analytic optimization problems.

\item {\bf Licenses for existing assets}
    \item[] Question: Are the creators or original owners of assets (e.g., code, data, models), used in the paper, properly credited and are the license and terms of use explicitly mentioned and properly respected?
    \item[] Answer: \answerNA{}
    \item[] Justification: The paper does not use external datasets, pretrained models, or third-party experimental assets. Prior algorithms and theoretical methods are credited through citations.

\item {\bf New assets}
    \item[] Question: Are new assets introduced in the paper well documented and is the documentation provided alongside the assets?
    \item[] Answer: \answerNA{}
    \item[] Justification: The paper does not introduce a new dataset, benchmark package, pretrained model, or released software asset. The analytic synthetic problem generator is described in the appendix for reproducibility.

\item {\bf Crowdsourcing and research with human subjects}
    \item[] Question: For crowdsourcing experiments and research with human subjects, does the paper include the full text of instructions given to participants and screenshots, if applicable, as well as details about compensation (if any)? 
    \item[] Answer: \answerNA{}
    \item[] Justification: The work does not involve crowdsourcing, human participants, user studies, or human-subject data.

\item {\bf Institutional review board (IRB) approvals or equivalent for research with human subjects}
    \item[] Question: Does the paper describe potential risks incurred by study participants, whether such risks were disclosed to the subjects, and whether Institutional Review Board (IRB) approvals (or an equivalent approval/review based on the requirements of your country or institution) were obtained?
    \item[] Answer: \answerNA{}
    \item[] Justification: The work does not involve human subjects or crowdsourcing experiments, so IRB approval or equivalent review is not applicable.

\item {\bf Declaration of LLM usage}
    \item[] Question: Does the paper describe the usage of LLMs if it is an important, original, or non-standard component of the core methods in this research? Note that if the LLM is used only for writing, editing, or formatting purposes and does \emph{not} impact the core methodology, scientific rigor, or originality of the research, declaration is not required.
    \item[] Answer: \answerNA{}
    \item[] Justification: The core methodology, theory, algorithms, and experiments do not use LLMs as an important, original, or non-standard research component. Any ordinary writing, editing, or formatting assistance does not affect the scientific content of the work.

\end{enumerate}

\end{document}